%%%%%%%%%%%%%%%%%%%%%%%%%%%%%%%%%%%%%%%%%%%%%%%%%%%%%%%%%%%%%%%%%%%%%%%%

%%% LaTeX Template for AAMAS-2021 (based on sample-sigconf.tex)
%%% Prepared by Natasha Alechina and Ulle Endriss (version 2020-08-06)

%%%%%%%%%%%%%%%%%%%%%%%%%%%%%%%%%%%%%%%%%%%%%%%%%%%%%%%%%%%%%%%%%%%%%%%%

%%% Start your document with the \documentclass command.
%%% Use the first variant below for the final paper.
%%% Use the second variant below for submission.

\documentclass[sigconf]{aamas} 
\settopmatter{printacmref=false}
%%% Load required packages here (note that many are included already).

\usepackage{balance} % for balancing columns on the final page

%%%%%%%%%%%%%%%%%%%%%%%%%%%%%%%%%%%%%%%%%%%%%%%%%%%%%%%%%%%%%%%%%%%%%%%%

%%% AAMAS-2021 copyright block (do not change!)

\setcopyright{ifaamas}
\acmConference[]{ArXiv}{2021}{March}
\copyrightyear{}
\acmYear{}
\acmDOI{}
\acmPrice{}
\acmISBN{}

\acmSubmissionID{286}
%% your usepackages here, for example:
\usepackage[boxed]{algorithm}
\usepackage[noend]{algorithmic}
\usepackage{booktabs}

%% do not change the following lines
\usepackage{flushend}
\usepackage{tikz}
%\usetikzlibrary{decorations.pathreplacing,calligraphy}
%% the rest of your preamble here

\newtheorem{remark}{Remark}%
\usepackage{nicefrac}
\usepackage{blkarray}% http://ctan.org/pkg/blkarray

\usepackage{bm}
\usepackage{eucal}
\usepackage{todonotes}
\setlength{\marginparwidth}{13mm}

%%%%%%%%%%%%%%%%%%%%%%%%%%%%%%%%%%%%%%%%%%%%%%%%%%%%%%%%%%%%%%%%%%%%%%%%%%%%%%%%%%%%%%%%%%%%%%%%%%%%%%%%%

\newcommand{\fahimeh}[1]{\color{magenta}{\textbf{Fahimeh says:}#1}}

\newcommand{\cI}{\mathcal{I}}

\newcommand{\cP}{\mathcal{P}}

\sloppy

\usepackage{enumerate}
			\usepackage{boxedminipage}
			\usepackage{xspace}

			\newcommand{\pbDef}[3]{%
			\noindent
			\begin{center}
			\begin{boxedminipage}{0.9\columnwidth}
			#1\\[5pt]
			\begin{tabular}{p{0.07\columnwidth}p{0.86\columnwidth}}
			Input: & #2\\
			{ Task}: & #3
			\end{tabular}
			\end{boxedminipage}
			\end{center}
			}

 \setcopyright{none}
\begin{document}

\setlength{\abovedisplayskip}{3pt}
\setlength{\belowdisplayskip}{3pt}

\title{Multi-Robot Task Allocation---Complexity and Approximation}

%\author{Paper 286}  % put your paper number here!

%% example of author block for camera ready version of accepted papers: don't use for anonymous submissions
%
\author{Haris Aziz}
\affiliation{
  \institution{UNSW Sydney}
  \city{Sydney, Australia}}
\email{haris.aziz@unsw.edu.au}

\author{Hau Chan}
\affiliation{
  \institution{University of Nebraska-Lincoln}
  \city{Lincoln, NE, USA}}
\email{hchan3@unl.edu}

\author{\'{A}gnes Cseh}
%\authornote{}
\orcid{0000-0003-4991-2599}
\affiliation{%
  \institution{Hasso Plattner Institute\\ University of Potsdam}
  %\streetaddress{Prof.-Dr.-Helmert-Str. 2-3}
  \city{Potsdam} 
  \state{Germany} 
  %\postcode{14482}
   %\\ \institution{Institute of Economics, Centre for Economic and Regional Studies}
  %\streetaddress{T\'{o}th K. u. 4}
  %\city{Budapest} 
  %\state{Hungary} 
  %\postcode{1097}
  }
  \email{agnes.cseh@hpi.de}
  
  \author{Bo Li}
\affiliation{
  \institution{%Department of Computing \\ 
  Hong Kong Polytechnic University}
  \city{Hong Kong, China}}
\email{comp-bo.li@polyu.edu.hk}

  \author{Fahimeh Ramezani}
\affiliation{
  \institution{UNSW Sydney}
  \city{Sydney, Australia}}
\email{f.ramezani@unsw.edu.au}

  \author{Chenhao Wang}
\affiliation{
  \institution{University of Nebraska-Lincoln}
  \city{Lincoln, NE, USA}}
\email{chenhwang4-c@my.cityu.edu.hk}

%\author{G.K.M. Tobin}
%\authornote{The secretary disavows any knowledge of this author's actions.}
%\affiliation{%
%  \institution{Institute for Clarity in Documentation}
%  \streetaddress{P.O. Box 1212}
%  \city{Dublin} 
%  \state{Ohio} 
%  \postcode{43017-6221}
%}
%\email{webmaster@marysville-ohio.com}
%
%\author{Lars Th{\o}rv{\"a}ld}
%\authornote{This author is the
%  one who did all the really hard work.}
%\affiliation{%
%  \institution{The Th{\o}rv{\"a}ld Group}
%  \streetaddress{1 Th{\o}rv{\"a}ld Circle}
%  \city{Hekla} 
%  \country{Iceland}}
%\email{larst@affiliation.org}
%
%\author{Valerie B\'eranger}
%\affiliation{%
%  \institution{Inria Paris-Rocquencourt}
%  \city{Rocquencourt}
%  \country{France}
%}
%\author{Aparna Patel} 
%\affiliation{%
% \institution{Rajiv Gandhi University}
% \streetaddress{Rono-Hills}
% \city{Doimukh} 
% \state{Arunachal Pradesh}
% \country{India}}
%\author{Huifen Chan}
%\affiliation{%
%  \institution{Tsinghua University}
%  \streetaddress{30 Shuangqing Rd}
%  \city{Haidian Qu} 
%  \state{Beijing Shi}
%  \country{China}
%}
%
%\author{Charles Palmer}
%\affiliation{%
%  \institution{Palmer Research Laboratories}
%  \streetaddress{8600 Datapoint Drive}
%  \city{San Antonio}
%  \state{Texas} 
%  \postcode{78229}}
%\email{cpalmer@prl.com}
%
%\author{John Smith}
%\affiliation{\institution{The Th{\o}rv{\"a}ld Group}}
%\email{jsmith@affiliation.org}
%
%\author{Julius P.~Kumquat}
%\affiliation{\institution{The Kumquat Consortium}}
%\email{jpkumquat@consortium.net}
%
%% The example's default list of authors is too long for headers
%\renewcommand{\shortauthors}{B. Trovato et al.}

\begin{abstract}
Multi-robot task allocation is one of the most fundamental classes of problems in robotics and is crucial  for various real-world robotic applications such as search, rescue and area exploration. 
We consider the Single-Task robots and 
Multi-Robot tasks Instantaneous Assignment (ST-MR-IA) setting  
where each task requires at least a certain number of robots 
and each robot can work on at most one task and incurs an operational cost for each task.  
Our aim is to consider a natural computational problem of 
allocating robots to complete the maximum number of tasks subject to budget constraints. 
We consider budget constraints of three different kinds: (1) total budget, (2) task budget, and (3) robot budget. 
We provide a detailed complexity analysis including results on approximations as well as polynomial-time algorithms for the general setting and important restricted settings.
\end{abstract}

\keywords{Matching; Robot-Task Allocation; Complexity}  % put your semicolon-separated keywords here!

\maketitle

%%%%%%%%%%%%%%%%%%%%%%%%%%%%%%%%%%%%%%%%%%%%%%%%%%%%%%%%%%%%%%%%%%%%%%%%%%%%%%%%%%%%%%%%%%%%%%%%%%%%%%%%%
%% start of main body of paper
	
\section{Introduction}
In recent years, robots of various forms (e.g., mobile and humanoid) 
have been deployed in the real world to complete a set of tasks in applications such as search and rescue~\cite{NOT+11}, area exploration~\cite{BMF+00}, and job automation \cite{Yan:2013aa,Dias:2006aa}. Tasks, in general, have various degrees of complexity and could require \emph{coalitions} of robots to cooperate jointly in order to be completed~\cite{Tang:2007aa,DuAs19a}. This naturally leads us to the crucial question of how to \emph{effectively} allocate a set of available robots to handle a set of given tasks for a particular application.  

Many of the early works from the past decades are empirical or heuristic-or---focusing on developing and evaluating systems 
(e.g., GOFER \cite{Caloud:1990aa}, CEBOT \cite{Fukuda:1992aa}, and ASyMTRe~\cite{Fang-Tang:2005aa}) 
for multi-robot coordination \cite{GeMa04a} for completing tasks. 
It was not until the early 2000s \cite{GeMa04a} 
when a solid formal (mathematical) model and taxonomy  of \emph{multi-robot task allocation} has been considered, incorporating 
organization theory from, e.g., economics, optimization, and operation research. 
%In particular, \citet{GeMa04a} classify multi-task allocation problems across three axes: 
%(1) single-task robots (ST) vs\ multi-task robots (MT), 
%(2) single-robot tasks (SR) vs\ multi-robot tasks (MR), and 
%(3) instantaneous assignment (IA) vs\ time-extended assignment (TA). 
Soon after emerged a line of more theoretical and algorithmic studies, modeling variations of the multi-task allocation problem \cite{Vig:2006aa,Tang:2007aa,DuAs19a,Service:2011aa}
and highlighting their connections to other multi-agent problems, such as coalition formation~\cite{Shehory:1998aa,Rahwan:2012aa,Sandholm:1999aa}.
%\hau{will go back to rewrite the last sentence}

This paper continues that line of research. We conduct a theoretical study of the class of problems that fall into the so-called \underline{S}ingle-\underline{T}ask robots and 
\underline{M}ulti-\underline{R}obot tasks \underline{I}nstantaneous \underline{A}llocation (ST-MR-IA) category. The defining features of this setting are as follows: (1) each task has the requirement that at least a certain number of robots have to be allocated to the task in order to complete it, (2) each robot can work on at most one task and incurs an operational cost for this task, (3) the robot assignment takes place  {{in a single shot.}} % in an offline setting. 

In particular, we are interested in a natural computational problem:  allocate robots to complete the maximum number of tasks subject to cost constraints. 
There are three kinds of cost constraints.
The first one requires that the total cost used to handle the tasks does not exceed a given budget, which captures the case when we have a budget for the overall application. The second cost constraint requires the cost incurring in handling each single task does not exceed the cost budget of this task, which models the owner of each task paying for the completion of their task separately. 
The last cost constraint puts a restriction on each robot, which captures the case when robots have a fixed radius of action in which they can handle tasks.

Our optimization objective aims at maximizing the number of handled tasks subject to a given budget constraint. This is a slightly different approach to minimizing the total cost subject to handling a minimum amount of tasks. Budget constraints can dramatically change the complexity landscape of theoretical problems~\cite{BBG+11,HKT16,Jut06}. Their usage in applications is ubiquitous: e.g., the increasingly popular robotics challenges~\cite{SSV16} routinely require participants to design robots that solve tasks within a given budget, which might be expressed in terms of time, distance travelled, cost, or robot size~\cite[Section~3.3]{robocup10}.

%We consider a fundamental task allocation problem where 
%we are given a set of tasks and a set of agents/robots. 
%Each task has the requirement that at least a certain number of robots 
%have to be allocated to the task in order to complete the task, 
%and each robot incurs an operational cost for each task.  %to complete the task. 	
%We consider a fundamental task allocation problem in which each task has the requirement of a certain number of agents/robots to be completed. 
%Naturally, the robots incur a cost for completing a given task. 
%This leads to a natural computational problem: allocate robots to complete the maximum number of tasks subject to minimizing total cost. 
%\hau{the problem below doesn't really say it is ST or MT}
%The problem has been referred to in the robotics literature as the \underline{S}ingle-\underline{T}ask robots and 
%\underline{M}ulti-\underline{R}obot tasks \underline{I}nstantaneous \underline{A}llocation (ST-MR-IA) problem~\cite{GeMa04a,KSD13a}. 
%Successively solving ST-MR-IA problems also provides a flexible approach to solve problems in which robots undertake multiple tasks.
%\hau{I don't quite get the meaning of the last sentence? what does it mean successively? successfully? }
%\hau{``ST means that each robot is capable of executing as most one task at a time'' \cite{GeMa04a} ??}

% \noindent
% \textbf{Contribution.} We formalize computational problems to capture the multi-criteria optimization and undertake a computational complexity study of the problems that have not been considered thoroughly in the past. 

\paragraph{Our Contributions.}
We formalize computational problems to capture the multi-criteria optimization nature of the problems and undertake a computational complexity study of them. 
One of our main contributions is a characterization of the complexity of ST-MR-IA problems with respect to the upper bound on the number of robots required to undertake a task. This requirement is not handled by well-known algorithms in the literature (see e.g., \citep{ChBrHo09a}).
% with respect to the upper bound on the number of robots required to undertake a task such as CBAA~\cite{ChBrHo09a}. 
% \hau{what is CBAA?  upper bound as in? should it be like lower bound requirements?}
Our key complexity results are summarized in Table~\ref{table:summaryresults}, where $q_j \ge 1$ represents how many robots it needs to complete task $t_j$, and $q^*=\max_{t_j\in T}q_j$ is the maximum requirement among all tasks. The results highlight how the complexity of various problems is dependent on the upper bound on the number of robots required to undertake a task.

We also present detailed analyses of the problems with respect to approximations. 
%When there is no restrictions on the numbers of tasks and robots or the costs for the robots to do the tasks \hau{what is?}, 
We present a simple greedy algorithm with an approximation ratio of $q^* + 1$ and we prove that unless P = NP, no polynomial-time algorithm has an approximation ratio of $(q^*)^{1-\epsilon}$ for any fixed $\epsilon>0$,
even when all tasks require the same number of robots.
%\hau{what is ti here? i over? what is the meaning of almost the optimal one?}
However, when $q_i$'s are all constants, this approximation can be improved to $\frac{1}{2}(q^* +1)$. Moreover, in the natural case of symmetric costs (i.e., for each task, all robots are identical), we show that for our three types of cost constraints, the problem is either tractable or admits a polynomial time approximation scheme (PTAS).

	\begin{table*}[htbp]
  \centering
  \caption{Complexity and Approximability of Multi-Robot Task Allocation.} \label{table:summaryresults}

    \begin{tabular}{cccccc} %{|p{1.8 cm}<{\centering}|p{2.718cm}<{\centering}|p{2.92cm}<{\centering}|p{3.92cm}<{\centering}|}
    \hline
     Problem  & {Symmetric + Uniform} &  {Symmetric} & {Uniform ($\forall j: q_j \geq 3$)} & $q^*\le 2$ & General  \\ %
    \hline
    {\sc MaxTaskWithinTotalBudget} & {in P} & {NP-hard / PTAS } & {NP-hard } & in P & UB: $q^*+1~~$ LB: $(q^*)^{1-\epsilon}$ \\

    \hline
    {\sc MaxTaskWithinTaskBudget} & {in P} & {in P } & {NP-hard } & ? &  UB: $q^*+1~~$ LB: $(q^*)^{1-\epsilon}$ \\
    \hline
   {\sc MaxTaskWithinRobotBudget} & {in P} & {in P } & {NP-hard } & in P & UB: $q^*+1~~$ LB: $(q^*)^{1-\epsilon}$  \\
    \hline
    \end{tabular}
\end{table*}

\section{Related Work}

We start with reviewing the most relevant papers in the robotics literature and then proceed to present various mathematical problems that are suitable to model different versions of the multi-robot task allocation problem.

\paragraph{Applied approaches} \citet{GeMa04a} classify multi-task allocation problems across three axes: 
(1) single-task robots (ST) vs\ multi-task robots (MT), 
(2) single-robot tasks (SR) vs\ multi-robot tasks (MR), and 
(3) instantaneous assignment (IA) vs\ time-extended assignment (TA). 
The class of problems we study fall into the Single-Task robots 
and Multi-Robot tasks Instantaneous Assignment (ST-MR-IA) category. 
% \hau{do we need to say or footnote what they are?}

Closest to our problem setting is the work of \citet{DuAs19a}, who present a greedy heuristic to solve the ST-MR-IA problem. They build this upon the many-to-one bipartite matching problem. To ensure the perfectness of the matching, they assume that the number of robots is exactly equal to the number of robots required to complete all the tasks. 

Firstly, we observe that some claims in the paper are incorrect. The mistake originates from the false assumption that the smallest cost edge always appears in at least one minimum-cost matching, and the greedy algorithm delivers an optimal solution. A matching algorithm is greedy if it iteratively chooses one of the minimum cost edges out of the edges still available to be added to the matching. Even though greedy is an easy-to-implement solution concept, the matching it delivers might have an arbitrarily higher cost than the minimum-cost matching. %As shown by \citet{DMS17a}, even finding the best greedy matching algorithm is hopeless: computing a minimum-cost matching among the matchings that can be outputted by a greedy algorithm is NP-hard even for very restricted graph classes and cost functions.

Secondly, the assumption on the number of robots imposes a strict restriction on the problem setting. In this paper, we relax this assumption and show that the complexity of the problem changes under the relaxation. More specifically, when the number of robots is less than the total number of robots required to complete all the tasks, then natural multi-criteria optimization problems arise involving two objectives: (1) maximize the number of completed tasks and (2) minimize costs.

\citet{DUAC19b} also consider a version of the multi-robot task allocation problem, where every task is handled by a subset of mobile robots. The goal is to find an allocation that minimizes a cost function, i.e., total traveled distance. They apply a clustering approach in order to bring nearby robots together to form a coalition to do the tasks. We reflect on their work by studying the special case of location-based costs in our framework. %add this part in introduction? %{\agnesnew Sure, sounds relevant!}{\fahimeh thanks Agnes}
 
%Costs can be limited according to three different criteria: (1) total cost of the whole assignment or (2) total cost for each task or (3) maximum cost for a robot. We observe that some claims in the paper are incorrect. The mistake originates from the false assumption that the smallest cost edge always appears in at least one minimum-cost matching, and the greedy algorithm delivers an optimal solution. A matching algorithm is greedy if it iteratively chooses one of the minimum weight edges out of the edges still available to be added to the matching. As shown by \citet{DMS17a}, even finding the best greedy algorithm is hopeless: computing a maximum-weight matching among the matchings that can be outputted by a greedy algorithm is NP-hard even for very restricted graph classes and cost functions. 
%The same result can be recast to say that computing a minimum-cost greedy matching is NP-hard for arbitrary costs.

% More general versions of the problems are MT-MR-IA and MT-MR-TA in which multiple tasks are allocated to the robots.
% These problems capture settings where robots have to be routed to undertake a sequence of tasks. We note that algorithms for ST-MR-IA where each robot undertakes one task can be used as myopic yet flexible algorithms to solve problems in which robots can undertake multiple tasks \hau{explain how? by applying the same algorithm multiple times? does it satisfy any criteria?}.
% One obtains flexibility, especially if the ability to undertake tasks and the associated costs are dynamically changing over time \hau{unclear how it would help}.
\paragraph{Related theoretical problems}
In all three of our problems, our goal is to handle as many tasks as possible, subject to a budget constraint. It is easy to see that it is sufficient to restrict our attention to solutions that allocate each task $t_j$ either 0 or exactly $q_j$ robots, where $q_j$ is the requirement of task $t_j$, which is the minimum number of assigned robots to handle~$t_j$. In view of this observation, our problems are closely related to the \emph{$D$-matching problem}~\cite{Cor88,Lov73,Seb93}, a variant of graph factor problems~\cite{Plu07}. In an instance of the $D$-matching problem, we are given a graph $G$, and a domain of integers is assigned to each vertex. The goal is to find a subgraph $G'$ of $G$ such that every vertex has a degree in $G'$ that is contained in its domain. In our case, this domain consists of the two integers $\{0,q_j\}$ for each task $t_j$ and it is $\{0,1\}$ for each robot. %\hau{as well as the graph is bipartite}. {\agnesnew graph factors, D-matchings, and b-matchings are all defined on general graphs.} A special subclass of $D$-matching problems is formed by instances where the domain is an interval $[0,b(v)]$ for each $v$, where $b$ is an integer-valued function on the set of nodes. This is called the $b$-matching problem~\cite{Gab83}, and its bipartite version is a subcase of network flow problems.  %\hau{without considering the budget or capacity constraints? this seems to be different problems that we discussed in Section 2}

Allocation problems can often be modeled by matchings, where the objects to assign are represented by vertices, and edges in the matching mark the assignment. Assignment costs are naturally represented by a cost function on the edges of the graph. Since ST-MR-IA deals with two inherently different categories of objects to match, a \emph{bipartite matching} model with tasks on one side and robots on the other side suits this application. The standard Hungarian algorithm for computing a minimum cost perfect matching takes $O(|V|^3)$ time, where $|V|$ is the number of vertices in the graph. For bipartite graphs and costs based on locations, a minimum cost perfect matching can be computed in $O(|V|^{2.5}\log |V|)$ time~\cite{Vaid89a}. \citet{Avis83a} discusses several heuristics for matching problems. For complete bipartite graphs with metric edge costs, the greedy algorithm provides an approximation factor of $\Theta(|V|^{\log \frac{3}{2}})$~\cite{ELW15a}.

Besides two-sidedness, another crucial feature of ST-MR-IA is the assignment of multiple robots to a single task, but only one task to a robot, which can be captured by \emph{many-to-one matchings}, a special case of $b$-matchings. Many-to-one matchings with lower and upper quotas are discussed by~\citet{ACGMM18a}. Their setting involves matching applicants to posts, where a post can only be opened if at least as many applicants are assigned to it as the post's lower quota. They establish various complexity results subject to this feasibility constraint. The key difference between their problem and ours is that they maximize the number or the total weight of assigned applicants (corresponding to our robots), while we maximize the number of handled tasks subject to a budget constraint. A crucial common feature of the two problems is that the inherently challenging task is to identify the posts to open, or, in our terminology, the tasks to handle. Notice that this question only arises if there are fewer robots than total requirements on the tasks' side, which is excluded in the setting of \citet{DuAs19a}.

\citet{KT13a} consider a variant of the \emph{generalized assignment problem with minimum quantities}. In their problem, we are given $m$ bins and $n$ items such that each item has a size $s_{ij}$ and  profit $p_{ij}$ when packed to bin $j$, and each bin has some lower and upper capacity. Each bin is required to be empty or packed above its given lower capacity. The goal is to find a feasible packing of the items into the bins that maximizes the total profit. For a special subclass of the instances, the feasibility of a packing coincides with feasibility for our assignment problem. This is the case when all items have unit profit in each bin, size $s_{ij}$ is equal to our cost $c_{ij}$, and the lower and upper capacity of each bin equals our task requirement~$q_j$. However, the objective function in the bin packing problem tallies the assigned robots instead of the handled tasks, as in our problems.

%This problem is similar to our problem, when all items have profit one to each bins and size $s_{ij}$ equal to cost $c_{ij}$. {\fahimehnew  The tasks are like bins and robots are like items. In our problem, every task has a requirement, which means every bin has a lower bound that is equal to the upper bound.}  We have the same constraints but our objective is totally different (i.e., we want to maximize the number of  handled tasks, which is, in the language of generalized assignment problem,  to maximize the number of opened bins.) \hau{i don't quite get the idea;  how do you map the lower bound and upper bound? My thought is that this depends on the settings we consider about how they are different didn't chenhao have a reduction to GAP?}{\fahimeh The tasks are like bins and robots are like items. In our problem every task has a requirement, which means every bin has a lower bound equals to upper bound. }
For the same problem, \citet{B15a} considers bin independent profits, which means that the profit of item $i$ is uniform for all the bins, (i.e.,  $p_{ij}=p_{i}$ for every bin $j$), and item independent profits, which means that all items have the same profit for each bin $j$, (i.e., $p_{ij}=p_j$ for every item $i$). He presents a polynomial algorithm for the bin independent profits problem and proves NP-hardness for the item independent problem.%  \hau{unclear? what is the case?}.

Another related problem is the \emph{2-dimensional knapsack problem}. In this problem, we are given a 2-dimensional knapsack and a set of 2-dimensional items. Each item has a 2-dimensional size vector and a profit. The goal is to select a subset of the items of maximum total profit such that the sum of their size vectors is bounded by the knapsack capacity in each dimension. \citet{kulik2010there}  prove that the 2-dimensional knapsack problem is NP-hard, even if all items have unit profit. We show that when the cost of every task is the same for each robot, one of our problems is equivalent to this problem.%2KP and is  NP-hard, we show this result on section 5.

\section{Model and Problems}

We consider the multi-robot task allocation problem $(R,T,q,C)$ that involves a set of robots $R=\{r_1,\ldots, r_n\}$, a set of tasks  $T=\{t_1,\ldots, t_m\}$, 
a vector of robot requirements $q=(q_1,\ldots, q_m)$, 
which specifies that task $t_j$ requires $q_j$ robots, 
and a non-negative cost matrix $C=\{c_{ij} \mid r_i\in R, t_j\in T\}$ 
that denotes the cost $c_{ij}$ for robot $r_i$ to do task $t_j$. %{\agnesnew Are costs non-negative or positive?}{\fahimehnew non-negative}
We assume every robot can perform every task, and we allocate at most one task to every robot as in the ST setting (e.g., see~\cite{GeMa04a}).
		%but later An edge set $E\subseteq R \times T$ specify which robot can work on which task. 
A task $t_j$ is \emph{handled} if it is allocated at least
 $q_j$ robots. 
If $t_j$ is allocated less than $q_j$ robots, then it is not handled. {{Our IA model captures an offline scenario, or a single decision in an online scenario, where all currently available robots can be assigned to complete the currently open tasks, without considering possible later events or a time component at all. }} %As such, there is no benefit of allocating more than $q_j$ robots to task~$t_j$. {\agnesnew I am strongly for writing 'at least $q_j$' when defining a handled task. This would correspond to the ILP formulation.}

	% Computational Problems
	%
	% \begin{itemize}
	% 	\item {\sc MaxTaskWithinTotalBudget}: Maximize the number of completed tasks within a target total cost (complete weighted bipartite graph; either complete a tasks or not at all)
	% 	\item {\sc MaxTaskWithinRobotCapacity}: Maximize the number of completed tasks within a target maximum cost per robot (incomplete unweighted bipartite graph; either complete a tasks or not at all)
	% \end{itemize}

In this paper we concentrate on three main problems. In each of these problems, a budget constraint $W \in \mathbb{Z}^+$ is given, and subject to this, the number of handled tasks is maximized. Notice that our objective function does not directly contain the cost of the assignment---this is encapsulated exclusively in the budget constraint.

% which are as follows.
%\hau{I wonder if we should motivate each of the three problems? probably if we do it in the intro; we might not need to do it again}
\begin{enumerate}
    \item In our first problem, we have a total budget $W$, and we want to know the maximum number of tasks one can handle so that the cost of the allocation does not exceed~$W$.   % They apply both to general costs as well to distance-based costs.
	 %{\agnesnew It think we should either go for decision problems, add a constant to the Input and substitute `Compute' by `Is there' in the definition, or stick to the optimization versions, and substitute `Question' by `Task'.}

	% \pbDef{\textsc{MaxTaskWithinTotalBudget}}{$(R,T,q,C)$, $W$, $k$.}{ Does there exist an allocation of total cost at most $W$ that handles at least $k$ tasks?}

\pbDef{\textsc{MaxTasksWithinTotalBudget}}{$(R,T,q,C)$, and $W$.} { Compute an allocation that handles the maximum number of tasks subject to having a total cost at most~$W$.}%{Compute an allocation of total cost at most $W$ that handles the maximum number of tasks.}

\item We define the cost of a handled task $t_j$ to be the sum of the $q_j$ robots' costs that are allocated to it. %\hau{should we define this a bit more precisely?}. 
In our second problem, every task has a limited budget~$W$. As such, task $t_j$ is handled if it is allocated at least $q_j$ robots and its cost is not more than~$W$. %Naturally, we want to determine the maximum number of handled tasks. 

\pbDef{\textsc{MaxTasksWithinTaskBudget}}{$(R,T,q,C)$, and $W$.}
{	Compute an allocation that handles the maximum number of tasks subject to incurring a cost of at most $W$ for each handled task.}
 %{Compute an allocation that incurs a total cost of at most $W$ to each task and handles the maximum number of tasks.}
 
 \item Our third problem is symmetric to \textsc{MaxTasksWithinTaskBudget}. Instead of limiting the cost of handling each task, we restrict the cost incurred to be at most $W$ for  each robot.  %Assume the cost of robot $r_i$ to do task $t_j$ is the distance between them and every robot has a restriction on how far they are allowed to go (e.g., battery or fuel restrictions). 

\pbDef{\textsc{MaxTasksWithinRobotBudget}}{$(R,T,q,C)$, and $W$.}{ Compute an allocation that handles the maximum number of tasks subject to incurring a cost of at most $W$ for each robot.}

We remark that \textsc{MaxTasksWithinRobotBudget} reduces to \textsc{MaxTaskWithinTotalBudget} if we assign a sufficiently large cost, say, $nW+1$, to those edges that incur a higher cost than the robot budget~$W$. Then, we set the total cost for the created \textsc{MaxTaskWithinTotalBudget} problem to $nW$, so that the only restriction it enforces is that these edges cannot occur in a solution.
\end{enumerate}

{{Notice that all three kinds of budget are practically able to lift the condition on all robots being able to handle all tasks. If there are no restrictions on the cost function on edges, then robots that are unable to handle a task can declare a cost larger than the budget $W$, which ensures that they will not be assigned to this particular task.}}

Next, we formalize our problems using integer linear programming.
Let $x_{ij} \in \{0,1\}$ denote whether robot $r_i$ is assigned to task $t_j$,
and $z_j\in \{0,1\}$ denote whether task $t_j$ is handled.
Then \textsc{MaxTasksWithinTotalBudget} can be modeled the following integer linear program~$\cI\cP_{1}$.

%\[
%\cI\cP_{1}:~~~~\begin{array}{ll}
%\max & \quad \sum_{j=1}^m z_j  \\
%\mbox{s.t.} & \ \left\{ \begin{array}{ll}
%\sum_{i=1}^n \sum_{j=1}^m c_{ij}x_{ij} \le W, \  & \\
%\sum_{i=1}^n x_{ij}    \geq q_j z_j, \  & \  \forall j\in [m],  \\
%\sum_{j=1}^m x_{ij}   \le 1, \  & \  \forall i\in [n],  \\
%x_{ij} \in \{0,1\}, \  & \  \forall i\in [n],  j\in [m], \\
%z_j \in \{0,1\}, \  & \  \forall j\in[m].  \\
%\end{array} \right.
%\end{array}
%\]

\begin{equation} \tag{$\cI\cP_{1}$}
\begin{array}{ll@{}rll}
\text{max}  & \displaystyle\sum\limits_{j=1}^{m} &z_{j} &&\\
\text{s.t.}& \displaystyle\sum\limits_{i=1}^{n} \displaystyle\sum\limits_{j=1}^{m} &c_{ij}x_{ij} &\leq W& \\
                 & \displaystyle\sum\limits_{i=1}^{n} & x_{ij} &\geq q_j z_j&  \forall j\in [m] \\
                 & \displaystyle\sum\limits_{j=1}^{m} & x_{ij} &\leq 1&  \forall i\in [n] \\
                 &  & x_{ij} &\in \{0,1\}&  \forall i\in [n], \forall j\in [m] \\
                 &  & z_{j} &\in \{0,1\}&   \forall j\in [m] \\
\end{array}
\end{equation}

The first constraint in $\cI\cP_{1}$ guarantees that the total cost does not exceed budget $W$.
The second constraint ensures that every handled task $t_j$ (with $z_j = 1$) is assigned at least $q_j$ robots. The third constraint expresses that each robot can perform at most one task. 
Regarding \textsc{MaxTasksWithinTaskBudget}, we can define $\cI\cP_{2}$ similarly to $\cI\cP_{1}$ where the first constraint is replaced by the following:
\[
\sum_{i=1}^n c_{ij}x_{ij} \le Wz_j, \quad \forall j.
\]
That is, the cost incurred by each handled task $t_j$ cannot exceed~$W$.
Finally, for \textsc{MaxTasksWithinRobotBudget}, we can define $\cI\cP_{3}$ and the only difference with $\cI\cP_{1}$ is that the first constraint is replaced by:
\[
\sum_{j=1}^m c_{ij}x_{ij} \le W, \quad \forall i \in [n],
\]
which means the cost incurred to each robot $r_i$ cannot exceed $W$. Accordingly, for small-size instances, we can use commonly used solvers to solve these integer programs \cite{Conforti:2014aa}.

\paragraph{Restrictions} Each of our three main problems can be adapted to various restrictions of the input. We consider three restrictions, each of which captures a specific feature of an application.
\begin{itemize}
    \item We say that the costs are \emph{location-based} if $c_{ij}$ is equal to the distance between $r_i\in R$ and $t_j\in T$, where both $r_i$ and $t_j$ have physical locations on the plane. %In the case of location-based costs, we also have the information about the locations of the tasks and robots. 
Location-based costs~\citep{LMK+05a,DuAs19a} are especially meaningful when considering multi-robot routing problems in which robots and tasks have their locations and the total costs are proportional to the distance travelled or fuel consumed.
\item Another restriction is when tasks have \emph{uniform requirements}: $q_j=q$ for all $j\in [m]$ (see, e.g., \citep{ACGMM18a,DuAs19a,DUAC19b}). This constraint means that each task in the instance needs the same amount of robots to be handled. Notice that the tasks are far from identical, because the cost of assigning a robot to a task differs for each task-robot pair.
\item Uniform costs on the tasks' side are called \emph{symmetric costs}: $c_{ij}=c_{kj}=c_j$ for all $r_i,r_k\in R$ and $j\in [m]$ (see e.g., \citep{DUAC19b}). This translates into paying the same price after each robot on a specific task. Notice that this set price differs for different tasks.
\end{itemize}
 
%Another restriction is when the costs are based on distances between entities and the robots and tasks are placed on a line. This was the primary way, costs are induced in the model considered by \cite{DuAs19a}.
%We can also consider a general version of the problem in which a robot may not be able to undertake a given task.  %\hau{this is like Problem 3?}. {\agnesnew What is this sentence supposed to mean here? Many of our theorems rely on the completeness of the graph. The only occasion we consider incomplete graphs is when some edges are deleted because their cost is above W.}
%Another problem we can consider is  each task has its own budget, and each robot has its own capacity  but, for simplicity, we consider the same budget (capacity) $W$ for all the tasks and robots. 

%\hau{I think we are missing a definition for approximation/approximation ratios}

\section{Tractable problems}
\label{se:tractable}
In this section, we present four problem variants that can be solved in polynomial time. Theorem~\ref{min} proves that one can compute the lowest cost at which all tasks can be handled, if any such assignment exists. After this we study the case of uniform requirements and symmetric costs for {\sc MaxTaskWithinTotalBudget} in Theorem~\ref{th:ursc}, and then symmetric costs with and without uniform requirements for the other two problems in Theorem~\ref{th:ur}. We finish with the case of requirements at most 2 in Theorem~\ref{th:r2}.

% \hau{{\Large  ALL: Can we replace the hardness results in this section with the impossibility results in the approximation section? they seems much stronger; again unless we want to say something about specific instances than maybe we can keep  }}

% {\color{blue} I guess it may still worth mentioning all. My understanding is: Although in the approximation section we have the strong hardness result for the general case, what we have in this section is that even if every task requires 3 robots, the problem is still hard. Moreover, for the case of symmetric costs, two of our three objectives are easy while the other one is hard (although it admits a PTAS).}

	% page
	% Consider a tuple $(R,T,E,w,q)$ where $R\cup T$ is the set of nodes in a bipartite graph, $E\subseteq R\times T$ is the set of edges between $R$ and $T$, $w$ is a function that specifies the weight $w(e)$ of an edge $e\in E$.
	% The quota function $q$ specifies the upper capacity $q(v)$ for a node $v\in R\cup T$. We assume that $q(r)=1$ for each $r\in R$.
	% We say that a b-matching is \emph{satisfaction-feasible} if each node in $t\in T$ is matched to either zero or $q(t)$ nodes. In such a matching a node $t\in T$ is \emph{satisfied} if it is matched to $q(t)$ nodes.
	%
	% \pbDef{\textsc{Satisfaction-b-Matching}}{$(R,T,E,w,q,q)$, $W$, $k$.}{ Does there exist a satisfaction-feasible b-matching of weight at most $W$ that satisfies at least $k$ nodes in $T$?  }

% 
\begin{theorem}\label{min}
If $n\geq \sum_{j\in [m]}q_j$, then the allocation that handles all tasks at a minimum total cost 
%subject to total minimum cost 
can be computed in polynomial time.
% {\agnesnew Thm 4.1 was already proved in~\cite[Theorem 3.1]{Gab83}.}
% \haris{added the ref and comment}
\end{theorem}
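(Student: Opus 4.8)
The plan is to reduce the problem to a minimum-cost bipartite perfect matching (equivalently, a minimum-cost flow), which is solvable in polynomial time by the Hungarian algorithm mentioned in the related work.

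First I would argue that, because all costs $c_{ij}$ are non-negative, there is an optimal allocation that assigns \emph{exactly} $q_j$ robots to each task $t_j$. Indeed, take any allocation that handles all tasks; if some task $t_j$ receives more than $q_j$ robots, we remove the surplus robots one by one. Each removal leaves at least $q_j$ robots on $t_j$ (so $t_j$ stays handled), touches no other task, and does not increase the total cost. Hence it suffices to search for a minimum-cost allocation in which every task receives exactly $q_j$ robots and every robot is assigned to at most one task.

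Next I would build the matching instance. Split each task $t_j$ into $q_j$ identical copies $t_j^1,\ldots,t_j^{q_j}$, each requiring exactly one robot and inheriting the cost $c_{ij}$ to every robot $r_i$; the number of such copies is $\sum_{j\in[m]}q_j \le n$, hence polynomial in the input size. To balance the two sides, add $n-\sum_{j}q_j$ dummy task-copies, each joined to every robot with cost $0$; a robot matched to a dummy copy is simply left unassigned. This yields a complete bipartite graph with $n$ robots on one side and $n$ task-copies (real plus dummy) on the other. A perfect matching corresponds exactly to an allocation giving one robot to each real copy, i.e.\ exactly $q_j$ robots to $t_j$, and its cost equals the total allocation cost. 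Since $n\ge\sum_{j}q_j$ and every robot can perform every task, such a perfect matching always exists, so the reduction is both feasible and faithful.

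Finally, I would invoke the Hungarian algorithm, which computes a minimum-cost perfect matching in $O(|V|^3)$ time with $|V|=2n$ here; translating the optimal matching back yields the desired minimum-cost allocation handling all tasks. The only point needing care is the equivalence argument of the first step together with the count $n\ge\sum_{j}q_j$; the rest is a direct appeal to a standard polynomial-time matching routine. An alternative that avoids the splitting blow-up when requirements are large is to cast the exact-$q_j$ formulation as a minimum-cost flow, with a source feeding each robot (capacity $1$), robot-to-task arcs of capacity $1$ and cost $c_{ij}$, and task-to-sink arcs of capacity exactly $q_j$, again solvable in polynomial time.
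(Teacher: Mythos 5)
Your proposal is correct and follows essentially the same route as the paper: the paper casts the problem directly as a minimum-weight perfect $b$-matching with task capacities $q_j$ and a single dummy task of capacity $n-\sum_{j\in[m]}q_j$, whereas you unfold the same construction into a unit-capacity perfect matching by splitting each task into $q_j$ copies and adding $n-\sum_j q_j$ zero-cost dummy copies, then invoke the Hungarian algorithm. The dummy-vertex trick, the ``exactly $q_j$ robots suffices'' observation, and the appeal to a standard polynomial matching routine are the same in both arguments, so there is nothing substantive to add.
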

%The problem is closely related to minimum cost perfect matching and the statement follows from \cite[Theorem 3.1]{Gab83}. Next, we give a proof for the sake of completeness. 

\begin{proof}
The problem can be modeled as a minimum-weight perfect $b$-matching problem in a bipartite graph, which can be solved in polynomial time~\cite[Corollary 31.4a.]{Sch03}. The graph is constructed as follows. One side consists of $r$ robot vertices, with $b(r) = 1$. The $m$ vertices on the other side represent tasks, each of them having capacity $b(t_j) = q_j$. To ensure the perfectness of the matching, we introduce a dummy task vertex $t$ with $b(t) = n - \sum_{j\in [m]}q_j$ and connect it to all robot vertices via a 0-weight edge.

Each perfect $b$-matching of weight $w$ corresponds to a robot assignment of cost $w$ handling all tasks and vice versa. 
%and on the other side it has vertices representing robots.% Construct $q_j$ subtask nodes $t_j^1,\ldots, t_j^{q_j}$ for each task~$t_j$. 
%Construct $n$ nodes for robots in $R$. 
%If $\sum_{j\in [m]}q_j<n$, then construct $n-\sum_{j\in [m]}q_j$ dummy subtasks nodes who are all connected with robots via zero weight edges. For any vertex $t_j^k$ representing a task $t_j$, connect it with robot $r_i$ with an edge of weight $c_{i,j}$. 
%Then a minimum-weight perfect matching $M$ of $G=(V,E)$ gives us a corresponding allocation in which $\{r_i,t_j^k\}\in M$ implies that robot $r_i$ is allocated task $t_j$. It can be proved that the allocation completes all the tasks subject to minimum cost.	
\end{proof}
		
Theorem~\ref{min} highlights the importance of a budget---over the total cost, the cost at each task, or at each robot. Without this budget restriction, finding a minimum cost assignment handling all tasks is not challenging. We now turn to our budget-restricted problems, and start with observing a polynomial-time solvable case when two of our restrictions, uniform requirements and symmetric costs are both present. 		

\begin{theorem}\label{th:ursc}
For uniform requirements and symmetric costs, {\sc MaxTaskWithinTotalBudget} can be solved in $O(m\log m)$ time.%, while {\sc MaxTaskWithinRobotBudget} and  {\sc MaxTaskWithinTaskBudget} %in these cases can both be solved in $O(m)$ time.
\end{theorem}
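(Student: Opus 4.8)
The plan is to collapse the problem to a one-dimensional greedy selection. First I would exploit the symmetry of costs: since assigning any robot to task $t_j$ incurs the same cost $c_j$, there is never any advantage to putting more than the required $q$ robots on a handled task---extra robots only consume capacity and, when $c_j > 0$, increase the total cost, while never increasing the number of handled tasks. Hence some optimal solution places exactly $q$ robots on each handled task, so handling $t_j$ costs exactly $q c_j$ and uses exactly $q$ robots. The instance then reduces to: choose $S \subseteq T$ maximizing $|S|$ subject to the robot constraint $q|S| \le n$ and the budget constraint $q \sum_{j \in S} c_j \le W$.

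Next I would solve this selection problem greedily. Sort the tasks so that $c_1 \le c_2 \le \dots \le c_m$, and let $k$ be the largest integer with $k \le \lfloor n/q \rfloor$ and $q \sum_{i=1}^{k} c_i \le W$; the algorithm handles the $k$ cheapest tasks. Optimality follows from a standard exchange argument: among all subsets of a fixed size $k$, the $k$ cheapest tasks minimize $\sum_{j \in S} c_j$, so if \emph{any} $k$-subset is feasible then the prefix of $k$ cheapest tasks is feasible as well (it satisfies the budget bound by minimality and the robot bound trivially, as both depend on $|S|$ only). Consequently the maximum feasible cardinality is exactly the largest such $k$, which is what a single prefix-sum scan over the sorted costs computes.

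The running time is dominated by the sort, giving $O(m \log m)$; forming the prefix sums of the sorted costs and scanning for the largest admissible $k$ adds only $O(m)$.

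The main obstacle---though a mild one---is to argue cleanly that the two constraints decouple: because the requirements are uniform, the robot bound depends on the solution only through $|S|$, and the budget bound is monotone in the chosen costs, so both are respected simultaneously by the prefix of cheapest tasks. I would therefore state explicitly that $n$ and $W$ enter the analysis only through the caps $\lfloor n/q \rfloor$ and $W/q$, and that neither over-assignment nor any more clever allocation of robots can ever increase the number of handled tasks.
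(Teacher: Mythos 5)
Your proposal is correct and follows essentially the same route as the paper: sort the tasks by their symmetric cost $c_j$ and greedily handle the cheapest ones until the budget or the robot supply runs out. The paper's proof is terser, omitting the exchange argument and the observation that exactly $q$ robots per handled task suffice, both of which you supply explicitly.
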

\begin{proof}
%Consider the {\sc MaxTaskWithinTotalBudget} problem first. 
Handing task $t_j$ incurs cost $qc_j$, and our goal is to handle as many tasks as possible without exceeding our total budget~$W$. To this end, we first sort and reindex the tasks in increasing order of their cost~$c_j$, which takes $O(m\log m)$ time. %We rename the tasks such that for every two tasks $t_j$ and $t_k$ with $j<k$ we have $c_j\leq c_k$. 
Then, we greedily allocate robots to the tasks starting from $t_1$ until no more task can be fully handled.		
\end{proof}
 %The same idea works for {\sc MaxTaskWithinRobotBudget}

%Consider the {\sc MaxTaskWithinTotalBudget} problem, first we sort the tasks in terms of  their costs $c_j$ increasingly. We rename the tasks such that for every two tasks $t_j$ and $t_k$ with $j<k$ we have $c_j\leq c_k$. We then greedily allocate robots to the tasks starting from $t_1$ until no more tasks can be fully handled.
			 
%For the {\sc MaxTaskWithinRobotBudget} and  {\sc MaxTaskWithinTaskBudget} cases, we do not need to  sort. 

%When the tasks have budget limit $W$, we try every task one by one. For example for task $t_j$, if $q*c_j\leq W$ and there are $q$ unallocated robots exist, we allocate them to $t_j$. We try this procedure for all the tasks which take $O(m)$ time. 
%\hau{initially, I was concerned about not sorting here; I agree with preprocessing and eliminate tasks that require more than W, then I was thinking about going back to the sorting case (first paragraph of the proof) but that assume that you still have an overall budget W    }
			 
%For the {\sc MaxTaskWithinRobotBudget}, we try every task one by one. For instance, for task $t_j$, if the cost $c_j$  is greater than robot capacity $W$; then there is no robot to do this task, so we try the next task.  For  the task $t_j$ with $c_j\leq W$ we allocate $q$ unallocated robots to it. We repeat this until unallocated robots are less than $q$ or all the tasks are finished, this take $O(m)$ time.
%\end{proof}
		
% 
		
\begin{theorem}\label{th:ur}
For  symmetric costs, {\sc MaxTaskWithinTaskBudget} and {\sc MaxTaskWithinRobotBudget} can be solved in $O(m\log{m})$ time. If uniform requirements are also present, the running times reduce to~$O(m)$.
\end{theorem}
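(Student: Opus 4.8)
The plan is to handle both problems by exploiting the symmetric cost structure, which collapses the role of individual robots: since every robot costs the same $c_j$ on task $t_j$, the only thing that matters for a task is \emph{how many} cheap robots it can grab, not \emph{which} ones. I would treat the two problems separately but with a common greedy philosophy.

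For \textsc{MaxTaskWithinTaskBudget}, first observe that handling task $t_j$ costs exactly $q_j c_j$ (all $q_j$ assigned robots cost $c_j$), so $t_j$ is \emph{eligible} for handling at all if and only if $q_j c_j \le W$; tasks violating this can be discarded up front. The subtlety is the global constraint $\sum_j x_{ij}\le 1$: robots are shared across tasks, so I must still check that I have enough robots in total to cover the chosen set of eligible tasks. Since any eligible task consumes exactly $q_j$ robots and robots are interchangeable, the feasibility of handling a set $S$ of eligible tasks depends only on $\sum_{j\in S} q_j \le n$. Hence the problem reduces to: among eligible tasks, pick as many as possible with total requirement at most $n$. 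To maximize the \emph{count}, I would greedily select eligible tasks in increasing order of $q_j$, accumulating requirements until the robot pool $n$ is exhausted. Sorting dominates the cost, giving $O(m\log m)$; under uniform requirements every eligible task has the same $q_j=q$, so no sort is needed and a single linear pass suffices for $O(m)$.

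For \textsc{MaxTaskWithinRobotBudget}, the per-robot constraint $\sum_j c_{ij}x_{ij}\le W$ means a robot may be assigned to task $t_j$ only if $c_j \le W$. Under symmetric costs this again depends only on the task, not the robot: a task $t_j$ with $c_j \le W$ can be served by \emph{any} robot, while a task with $c_j > W$ can never be handled (assigning even one robot would violate its budget). So I would again discard ineligible tasks (those with $c_j > W$), and the remaining problem is purely combinatorial—maximize the number of eligible tasks handled subject to $\sum_{j\in S} q_j \le n$—identical in form to the reduced task-budget case. The same greedy-by-$q_j$ argument yields $O(m\log m)$, and $O(m)$ under uniform requirements.

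The main obstacle—really the only point needing care—is justifying the exchange argument that greedily selecting tasks of smallest requirement is optimal for maximizing the count under the knapsack-like constraint $\sum_{j\in S}q_j\le n$. This is the standard matroid/exchange observation: if an optimal solution omits a low-requirement eligible task while including a higher-requirement one, swapping them preserves feasibility and does not decrease the count. I would state this cleanly, note that eligibility is decided independently per task, and confirm that the total robot count $n$ is the sole coupling constraint once symmetric costs have flattened robot identities. Everything else is bookkeeping on the running time.
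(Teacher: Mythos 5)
Your proposal is correct and follows essentially the same route as the paper: filter tasks by the per-task eligibility condition ($q_j c_j \le W$ for the task budget, $c_j \le W$ for the robot budget), then greedily select eligible tasks in increasing order of $q_j$ until the robot pool is exhausted, with the sort dominating the running time and disappearing under uniform requirements. The paper even handles the robot-budget case by reducing it to the task-budget case with an artificial budget, which is just a repackaging of your observation that after filtering, only the constraint $\sum_{j\in S} q_j \le n$ remains.
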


\begin{proof}
In {\sc MaxTaskWithinTaskBudget}, each task has a budget of $W$, and thus can be handled if and only if $q_j c_j\leq W$ and there are still $q_j$ unallocated robots. In order to maximize the number of handled tasks, we first need to order and reindex them in an increasing order of their requirements~$q_j$, which takes $O(m\log{m})$ time. It is enough to check $q_j c_j\leq W$ for each task greedily in this order and assign arbitrary $q_j$ robots to it in case it is fulfilled and there are enough unassigned robots left. If the requirements are uniform, we can omit the ordering step, which reduces the running time to~$O(m)$.

A task $t_j$ can only be handled by a set of robots in {\sc MaxTaskWithinRobotBudget} if $c_j \leq W$ for its symmetric cost~$c_j$. All tasks not fulfilling this inequality can be deleted. For the remainder of the graph, no budget restriction applies, and thus we can run our algorithm for {\sc MaxTaskWithinTaskBudget} with an artificial task budget $\max_{j \in[m]}{q_j c_j}$.
\end{proof}
%, because $q$ is uniform for all tasks, and the total cost of the assignment is not to be minimized.
%The algorithm works as follows. For both problems, we sort the tasks based on $q_j$ increasingly. We rename the tasks so that for two tasks $t_j$ and $t_k$ with $j<k$ we have $q_j\leq q_k$.  For {\sc MaxTaskWithinTaskBudget} problem, we then allocate $q_j$ robots to  tasks $j$ starting from $t_1$ if it satisfies budget constraint on task $j$, which is  $q_j*c_j\leq W$, and continue  until no more tasks can be fully handled. This produces an optimal solution, since for every task all the robots are the same and every robot can do every task. 
				
%For the {\sc MaxTaskWithinRobotBudget} problem,   after sorting, we choose a task $t_1$; if $c_1$ is greater than robot capacity $W$, then there is no any robots to do it and we try  next task. So if $c_1\leq W$; then we assign $q_1$ robots to it and  continue  until no more tasks can be fully handled. 
%\end{proof}

%\subsection{Requirement of At Most Two Robots per Task}

%\hau{do we want to insert a modified version of the pic back if we have space?} {\agnesnew I don't think this proof needs a figure. The construction is fairly simple. Let me know if I should draw something.}

Next, we assume every task requires at most two robots and show {{ in Theorem~\ref{th:r2}}} that \textsc{MaxTaskWithinTotalBudget} and its special case, \textsc{MaxTaskWithinRobotBudget} can be solved in polynomial time. {{Our proof relies on the following statement.}}

{\begin{lemma}
\label{le:2}
If each task requires at most 2 robots, then finding a minimum total budget $W$ under which $k$ tasks can be handled can be solved in time $O(k(mn+(m+n)\log(m+n)))$.
    %If each task requires at most 2 robots, then the decision version of \textsc{MaxTaskWithinTotalBudget}, the problem of deciding whether $k$ tasks can be handled within total cost $W$, can be solved in time $O(k(mn+(m+n)\log(m+n)))$.
\end{lemma}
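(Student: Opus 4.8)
The plan is to reduce the problem to a \emph{minimum-weight matching} instance and then to exploit the fact that we only ever need $k$ augmentation phases. First I would record the elementary but crucial observation that, since all costs are non-negative, an optimal solution never assigns a robot to a task beyond its requirement: every handled task $t_j$ receives \emph{exactly} $q_j\in\{1,2\}$ robots, and every unhandled task receives none. Hence ``handle $k$ tasks at minimum total cost'' is the same as choosing a set $S$ of $k$ tasks together with a minimum-cost assignment of distinct robots meeting the demand $q_j$ of each $t_j\in S$.

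Next I would build an auxiliary graph $H$ on $O(m+n)$ vertices. Each robot $r_i$ becomes a vertex $u_i$. A task with $q_j=1$ becomes a single vertex $v_j$ joined to every $u_i$ by an edge of weight $c_{ij}$; leaving $v_j$ unmatched means $t_j$ is not handled. A task with $q_j=2$ becomes a \emph{gadget} of two vertices $v_j,v_j'$ linked by a zero-weight ``skip'' edge, with both $v_j$ and $v_j'$ joined to every $u_i$ by weight-$c_{ij}$ edges. The key property to verify is that, in any matching saturating all gadget vertices, a $q_j=2$ task is either switched \emph{off} (the skip edge is used, cost $0$) or handled by two \emph{distinct} robots $r_p,r_q$ at cost $c_{pj}+c_{qj}$; the gadget rules out the illegitimate state in which a single robot sits on a two-robot task. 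I would then prove the exact correspondence: assignments handling a task set $S$ at cost $w$ are in cost-preserving bijection with matchings of $H$ that saturate every gadget vertex, route $|S|$ gadgets through robot edges, and have weight $w$. Note that $H$ has $\Theta(mn)$ edges and is \emph{non-bipartite}, since the triangles $u_p$--$v_j$--$v_j'$--$u_p$ through the two-robot gadgets are unavoidable.

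Finally I would solve the matching problem by the primal--dual (successive shortest augmenting path) method. Starting from the zero-cost configuration that switches every task off (all skip edges chosen, all $q_j=1$ task-vertices unmatched), I would perform exactly $k$ augmentation phases; each phase finds a minimum-weight alternating augmentation that switches one further task \emph{on}, increasing both the matching cardinality and the handled count by one, while possibly rerouting robots among already-handled tasks. Because the minimum cost of handling $i$ tasks is convex in $i$, these greedy phase-by-phase augmentations yield the true optimum for every $i\le k$, so the weight after phase $k$ is exactly the minimum budget $W$ sought. Using potentials to keep reduced costs non-negative, each phase is a single Dijkstra-style shortest-alternating-path search on $O(m+n)$ vertices and $O(mn)$ edges, costing $O(mn+(m+n)\log(m+n))$ with a Fibonacci heap; over $k$ phases this gives the claimed running time.

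The main obstacle I anticipate is the \emph{all-or-nothing (parity) requirement} of the two-robot tasks, which is exactly what forces the non-bipartite gadget and hence Edmonds-style blossom handling rather than a plain bipartite shortest-path argument. Two points need care: showing that the blossom shrinking triggered around the two-robot gadgets can still be carried out within a single shortest-path computation per phase, so that the per-phase cost stays $O(mn+(m+n)\log(m+n))$; and justifying, via the convexity of the optimal cost in the number of handled tasks, that truncating the primal--dual process after $k$ phases returns the genuine minimum-cost way of handling $k$ tasks.
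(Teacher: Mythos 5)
Your proposal is correct in outline but takes a genuinely different route from the paper. The paper reduces the problem to the \emph{constrained maximum weighted 2-path packing} problem on bipartite graphs ($k$-\textsc{cmwpb}) and invokes the $O(k(mn+(m+n)\log(m+n)))$ algorithm of Fan, Wang and Cheng as a black box: tasks sit on the degree-$\{0,2\}$ side, robots on the degree-$\{0,1\}$ side, unit-requirement tasks receive a private dummy robot with a dominant edge weight $N$, and costs are flipped to weights $N-c_{ij}$ so that a maximum-weight packing of $k$ paths has weight $2kN-W$. You instead build a non-bipartite gadget graph (skip edge between the two copies of a two-robot task) and run $k$ phases of successive minimum-weight augmenting paths for general matching. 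The two constructions encode the same ``0 or exactly $q_j$'' constraint in dual ways, and your accounting is sound: the all-skip matching is extreme of weight $0$, every augmenting path preserves saturation of the gadget vertices (free vertices are only robots and unit-requirement tasks), and a parity argument shows each augmentation raises the handled count by exactly one. What the paper's route buys is that all the nontrivial algorithmic work is delegated to a single citation; what your route buys is a self-contained reduction to the best-understood object in the area, minimum-weight matching of prescribed cardinality. The price is that you must import two nontrivial facts rather than prove them: (i) that augmenting an extreme matching along a minimum-weight augmenting path yields an extreme matching of the next cardinality (this is the precise statement you need --- the convexity of the optimal cost in $i$ is a consequence of it, not a substitute for it, so you should cite the augmentation theorem directly, e.g.\ from Lawler or Schrijver); and (ii) Gabow's $O(|E|+|V|\log|V|)$ bound for a single phase of weighted non-bipartite matching with blossom handling, which is exactly the obstacle you flag and is a citable result rather than something a ``Dijkstra-style'' search gives you for free. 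With those two citations in place your argument closes and matches the stated running time.
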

\begin{proof}
The constrained maximum weighted 2-path-packing of size $k$ problem on bipartite graphs ($k$-\textsc{cmwpb})~\citep{FWC14} is defined on a bipartite graph with positive edge weights and vertex sets $A$ and~$B$. The task is to pack exactly $k$ non-overlapping 2-paths so that vertices in $A$ have degree 0 or 2, while vertices in $B$ have degree 0 or 1 in the solution. %The question is whether there is such a 2-path packing that is of weight at least a given number~$T$. 
The optimal solution is of maximum weight subject to this feasibility requirement.
This problem can be solved in time $O(k(mn+(m+n)\log(m+n)))$, where $|A|=m$ and $|B|=n$. We convert each instance of \textsc{MaxTaskWithinTotalBudget} into an instance of this problem.

Each task will be represented by a vertex in $A$, and each robot by a vertex in $B$. Besides these vertices, we also introduce a dummy robot $r_j$ to each task $t_j$ with $q_j = 1$, and set the edge weight $w(r_j,t_j)$ to a number $N$ that is larger than the maximum edge cost in our input. The weight function on the edge connecting $r_i$ to $t_j$ is defined as $w(r_i,t_j) = N-c_{ij} > 0$. 
%\hau{this is for the non-dummy one right? do you have any issue when you have cij = 0 from the input? those edges would too have a weight of N}. %Then we set all task requirements to~2. 
These modifications might enlarge the set of vertices compared to $m+n$ and the set of edges compared to $mn$ by a constant factor at most.

%We now show that each optimal solution for $k$-\textsc{cmwpb} corresponds to a minimum cost robot assignment handling $k$ robots and vice versa.
We now show that each optimal solution for $k$-\textsc{cmwpb} corresponds to a feasible robot allocation handling $k$ tasks, while each feasible robot allocation handling $k$ tasks corresponds to a not necessarily optimal solution for $k$-\textsc{cmwpb}. Each 2-path is a feasible solution for $k$-\textsc{cmwpb} by definition and corresponds to a handled task with exactly 2 assigned robots. 
In the case of a handled unit-requirement task, then we can assume without loss of generality that %\hau{if it is selected right?},
the maximum weight solution includes the $N$-cost dummy edge, since it is the largest edge weight in the graph and the dummy robot has no further edge. % (If $c_{ij}>0$ is not assumed, then we need to introduce a technical $\varepsilon$-shift to the weights.) 
This guarantees that unit-requirement tasks are indeed assigned exactly one robot from the original set of robots, if any. Due to the non-overlapping condition, each robot is assigned to one task at most. Similarly, each robot assignment handling $k$ tasks readily translates into a solution for $k$-\textsc{cmwpb} if we assign each handled unit-capacity task its dummy robot as well.

If the cost of the robot allocation is denoted by $C$, then the weight of the corresponding solution for $k$-\textsc{cmwpb} is $2kN-C$, because $w(r_i,t_j) = N-c_{ij}$ and each handled task adds exactly 2 edges to the 2-path packing. %\hau{I don't quite get this; are you saying that $C$ is the cost from the robot allocation with exactly k tasks being handled where $C <= W$? then when you translate to the solution for $k$-\textsc{cmwpb}, then you have a different utility? could you explain it a bit more how you get the number? the second part is from the reduction N - cij, but the first part sum up to exactly 2kW? is it 2kN?}
This weight function is obviously maximized if and only if $C$ is minimized. By computing an optimal solution for $k$-\textsc{cmwpb}, we can thus output the cost $W$ of the minimum cost assignment handling $k$ tasks.
%\hau{in the last sentence, you are using the fact that 2kN is a constant so you can just run the algorithm for cmwpb problem? which sort of minimizing cost C? also, to be safe, I would say N is large enough so that the first part has to come from the N part; Btw, I found the wording for "We now show that each optimal solution ..." the first few sentences for the previous confusing; here you just showed they are indeed opt for each other in some sense unless I misunderstood something}
%If this is below cost $W$, we output yes, otherwise we output no to our decision problem.
\end{proof}

\begin{theorem}\label{th:r2}
If each task requires at most 2 robots, \textsc{MaxTaskWithinTotalBudget} and \textsc{MaxTaskWithinRobotBudget} can be solved in time $O(m\log{m}(mn+(m+n)\log(m+n)))$.
\end{theorem}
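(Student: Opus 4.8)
The plan is to use Lemma~\ref{le:2} as a subroutine inside a search over the number of handled tasks. For each $k\in\{0,1,\dots,m\}$, let $W^\ast(k)$ denote the minimum total cost at which $k$ tasks can be handled, with the convention $W^\ast(k)=\infty$ when fewer than $k$ tasks can ever be handled (for instance, for lack of robots). By Lemma~\ref{le:2} each value $W^\ast(k)$ is computable in time $O(k(mn+(m+n)\log(m+n)))$. The crucial observation is that the answer to \textsc{MaxTaskWithinTotalBudget} under budget $W$ is exactly the largest $k$ with $W^\ast(k)\le W$, so it suffices to locate this threshold.

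First I would argue that $W^\ast$ is non-decreasing in $k$. Given any allocation handling $k$ tasks at cost $W^\ast(k)$, delete all robots assigned to one handled task; since every $c_{ij}\ge 0$, the result handles $k-1$ tasks at cost at most $W^\ast(k)$, so $W^\ast(k-1)\le W^\ast(k)$. Monotonicity also respects feasibility: if $k$ tasks cannot be handled, then neither can $k+1$, so the $\infty$ values form an upper tail. Hence the predicate ``$W^\ast(k)\le W$'' is monotone in $k$, and the optimal number of handled tasks can be found by binary search over $k\in\{0,\dots,m\}$.

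The running time then follows. Binary search performs $O(\log m)$ evaluations, and each evaluation calls Lemma~\ref{le:2} with some $k\le m$, costing $O(k(mn+(m+n)\log(m+n)))=O(m(mn+(m+n)\log(m+n)))$. Multiplying yields the claimed bound $O(m\log m\,(mn+(m+n)\log(m+n)))$. I would emphasize that a plain linear scan over $k$ would incur an extra factor of $m$ rather than $\log m$, so it is precisely the monotonicity-enabled binary search that delivers the stated running time; this is the main point of the argument, the rest being a direct invocation of Lemma~\ref{le:2}.

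Finally, for \textsc{MaxTaskWithinRobotBudget} I would invoke the reduction already recorded after the problem definitions: a per-robot budget $W$ is equivalent to forbidding every edge with $c_{ij}>W$, which we realize by inflating such costs to $nW+1$ and solving \textsc{MaxTaskWithinTotalBudget} with total budget $nW$. This transformation preserves the requirement $q_j\le 2$ and only rescales the instance, so running the algorithm above solves \textsc{MaxTaskWithinRobotBudget} within the same time bound. I do not expect a genuine obstacle here; the only details requiring care are the correct treatment of the infeasible regime $W^\ast(k)=\infty$ during the search, and choosing the ``large'' constant $N$ inside Lemma~\ref{le:2} to exceed the inflated value $nW+1$ after the reduction, so that a solution respects the budget $nW$ if and only if it avoids every forbidden edge.
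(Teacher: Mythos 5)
Your proposal is correct and follows essentially the same route as the paper: binary search over $k$ with Lemma~\ref{le:2} as the oracle for the minimum cost of handling $k$ tasks, plus the cost-inflation reduction for the per-robot budget. You in fact supply two details the paper leaves implicit---the monotonicity of $W^\ast(k)$ that justifies binary search, and the explicit handling of \textsc{MaxTaskWithinRobotBudget} via the reduction---which strengthens rather than changes the argument.
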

\begin{proof}
We run binary search on $1, 2, \ldots, m$ for the largest $k$ value for which the cost of the minimum cost assignment handling exactly $k$ tasks is below the given~$W$. We run the algorithm from Lemma~\ref{le:2} to compute the minimum cost for each~$k$.
\end{proof}
}

%\hau{did you talk about \textsc{MaxTaskWithinRobotBudget} at all? it seems like most of the discussions here are for \textsc{MaxTaskWithinTotalBudget}; it would be good to mention}

\section{Hard problems}

In this section, we prove several computational intractability results. In Section~\ref{se:tractable}, we have shown polynomial time solvability for 2 of our problems if symmetric costs are present. To complement this result, in Theorem~\ref{thm:sssp} we prove hardness for our third problem, {\sc MaxTaskWithinTotalBudget}, under the same assumption. Then, we show in Theorem~\ref{thm:hardness_merged} that uniform requirements alone do not simplify the problem, and we finish with proving in Theorem~\ref{th:loc} that {\sc MaxTaskWithinTotalBudget} is NP-hard even for location-based costs.

To motivate the hardness result in Theorem~\ref{thm:sssp}, we first demonstrate through two examples that greedy algorithms do not deliver an optimal solution for {\sc MaxTaskWithinTotalBudget} with symmetric costs. We start with the greedy algorithm  that sorts the tasks in increasing order of their requirements. %\hau{requirements?}, 
%the greedy algorithm does not produce an optimal solution for {\sc MaxTaskWithinTotalBudget}. %{\agnesnew I'm unsure whether demonstrating that greedy doesn't work for NP-complete problems should be included in a research paper.}
% {\color{blue} Since we have enough space, I think it is OK to have examples here to present intuitions; I do not have strict preference. Actually, I had some impression that the examples can disprove some wrong claim in \citet{DuAs19a}. Maybe I misunderstand something.} {\agnesnew As far as I understand, the other authors claimed that greedy solves weighted matching to optimality. Maybe we shouldn't make a point of it why exactly this was false, because it would tarnish the whole topic?}

\begin{example}
Consider $T=\{t_1, t_2, t_3\}$ with $q=(1, 2, 2)$ and $4$ robots. With symmetric costs $c_{1}=100, c_{2}=1, c_{3}=1$ and total budget $W=100$, the greedy algorithm, based on sorting the tasks in increasing order of their requirement $q_j$, chooses task $t_1$ and obtains a value of 1. 
However, the optimum solution is 2 (i.e., $t_2$ and $t_3$ are handled). 
\end{example}

% \hau{we can probably remove the following example if we need more space later}{\fahimeh The idea was that since sorting works well for {\sc MaxTaskWithinRobotBudget}  and {\sc MaxTaskWithinTaskBudget} problems, I guess maybe it works for third problem as well, so I gave  example4.4 to show  sorting base on $q_j$ is not working for {\sc MaxTaskWithinTotalBudget} problem. Since we have total budget, I say maybe we should try every task cost which is $c_j*q_j$, in the following example I show this is not also produce the optimum solution, and finally we show this problem is NP-hard} 

Another counterexample shows suboptimality for the greedy algorithm that sorts tasks based on $q_j c_{j}$.%, we may not produce an optimal solution for {\sc MaxTaskWithinTotalBudget}.

\begin{example}
Let $T=\{t_1, t_2, t_3\}$ with $q=(100, 2, 2)$ and we have $100$ robots with symmetric cost $c_1=1, c_2=60, c_3=60$. If our total budget is $W=250$, then the greedy algorithm, based on $q_j c_{j}$, chooses task $t_1$ and allocate all 100 robots to it. However, the optimal solution is to handle tasks $t_2$ and~$t_3$. 
%\hau{it seems to be me that many robots are not used here ... not really efficient ...}{\fahimeh The important thing for the problem is number of handled tasks, we are not care about how many robots still we have. }
\end{example}

More generally, under symmetric costs, we can show that {\sc MaxTaskWithinTotalBudget}, unlike {\sc MaxTaskWithinTaskBudget} and {\sc MaxTaskWithinRobotBudget}, is NP-hard.

\begin{theorem}\label{thm:sssp}
For symmetric costs, {\sc MaxTaskWithinTotalBudget} is NP-hard.
\end{theorem}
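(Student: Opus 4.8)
The plan is to exploit the two-constraint structure that symmetric costs impose. When $c_{ij}=c_j$, handling a set $S$ of tasks costs $\sum_{j\in S} q_j c_j$ and consumes $\sum_{j\in S} q_j$ robots, so an allocation handling $S$ is feasible iff $\sum_{j\in S} q_j \le n$ and $\sum_{j\in S} q_j c_j \le W$, while the objective is to maximize $|S|$. This is exactly a two-dimensional knapsack with unit profits, where task $t_j$ is an item with weight vector $(q_j,\, q_j c_j)$ and the two capacities are $n$ and $W$. I would therefore reduce from an NP-hard ``number'' problem in which one must pick items meeting two \emph{opposing} sum constraints; a convenient choice is \textsc{Equal-Cardinality Partition}: given positive integers $a_1,\dots,a_{2N}$ with $\sum_i a_i = 2B$, decide whether some $S$ with $|S|=N$ satisfies $\sum_{i\in S} a_i = B$.

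Given such an instance I construct $2N$ tasks, setting $q_j = a_j$ and choosing the symmetric cost $c_j$ so that $q_j c_j = C - a_j$, i.e.\ $c_j = (C-a_j)/a_j$, for a fixed constant $C > \max_j a_j$ (so every $c_j>0$). I then set the robot bound $n = B$, the budget $W = NC - B$, and ask whether at least $N$ tasks can be handled. The robot constraint becomes $\sum_{j\in S} a_j \le B$ and the budget constraint becomes $\sum_{j\in S}(C-a_j) \le NC - B$. Adding these two inequalities for an arbitrary $S$ yields $(|S|-N)C \le 0$, hence $|S|\le N$ for every feasible allocation, so no allocation can handle more than $N$ tasks. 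Moreover, when $|S|=N$ the two constraints read $\sum_{j\in S} a_j \le B$ and $\sum_{j\in S} a_j \ge B$, forcing $\sum_{j\in S} a_j = B$; conversely any balanced subset handles exactly $N$ tasks within both bounds. Thus at least $N$ tasks are handlable iff the \textsc{Equal-Cardinality Partition} instance is a yes-instance, establishing NP-hardness.

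The points I expect to be the crux are (i) arranging the two constraints to pull in opposite directions so that the count objective pins down the value $B$, which is precisely what the auxiliary constant $C$ accomplishes, and (ii) the encoding: the costs $c_j = (C-a_j)/a_j$ are rational, so to remain within an integer-cost model I would scale all costs and the budget by $M=\mathrm{lcm}(a_1,\dots,a_{2N})$, observing that $\log M \le \sum_j \log a_j$ keeps every number polynomial in the input size (the requirements $q_j=a_j$ and the bound $n=B$ are left unscaled). This also aligns with the earlier remark that, under symmetric costs, $n$ enters the input merely as a binary-encoded capacity, so only weak NP-hardness via a number problem is available; a reduction from \textsc{Equal-Cardinality Partition}---equivalently, the unit-profit two-dimensional knapsack shown NP-hard by \citet{kulik2010there}---is exactly the right source.
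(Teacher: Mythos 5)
Your proof is correct, and it takes a genuinely more self-contained route than the paper's. You and the paper share the same key first step: under symmetric costs an allocation handling a set $S$ is feasible iff $\sum_{j\in S} q_j \le n$ and $\sum_{j\in S} q_j c_j \le W$, so the problem is exactly a two-dimensional knapsack with unit profits. The paper stops there and invokes the NP-hardness of unit-profit 2KP due to Kulik and Shachnai \cite{kulik2010there} (handling rationals by clearing denominators, just as you do). You instead give an explicit reduction from \textsc{Equal-Cardinality Partition}, and your construction checks out: with $q_j=a_j$, $q_jc_j=C-a_j$, $n=B$, $W=NC-B$, summing the two constraints yields $|S|\le N$, and at $|S|=N$ they pinch $\sum_{j\in S}a_j$ to exactly $B$; the lcm scaling keeps all numbers polynomial in the encoding length. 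What your version buys is a proof that does not rest on an external hardness result and makes the ``opposing constraints'' mechanism visible; what the paper's version buys is brevity and, via the same equivalence, the PTAS in Section~\ref{sec:improved-approx} for free. Both arguments carry the same representational caveat, which you correctly flag: the reduction produces $n=B$ robots with $B$ given in binary, so one must adopt the convention that under symmetric costs the instance is encoded by the cost vector plus the integer $n$ (rather than an explicit $n\times m$ matrix), and the hardness obtained is of the weak (number-problem) variety --- exactly as in the paper's reduction from 2KP.
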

\begin{proof}
Consider  the \emph{2-dimensional Knapsack Problem} (2KP): given a $2$-dimensional bin, and a set of items, each having a $2$-dimensional size vector and a profit, the goal is to select a subset of the items of maximum total profit such that the sum of their size vectors is bounded by the bin capacity in each of the two dimensions. Kulik and  Shachnai~\cite{kulik2010there} prove that the 2KP is NP-hard, even when all items have unit profit. We show that {\sc MaxTaskWithinTotalBudget} with symmetric costs is equivalent to the 2KP with unit profit, and thus is NP-hard.

For any instance $(R,T,q,C)$ of {\sc MaxTaskWithinTotalBudget} with symmetric costs, let $x_j$ be a variable indicating whether to handle task $t_j\in T$. The LP formulation is given by 
\begin{align}
\text{max}~~~~~&\sum_{j=1}^mx_j   & \nonumber\\
\text{s.t.}~~~~&\sum_{j=1}^mq_jx_j  \le n, &  \label{eq:1}\\
&\sum_{j=1}^mq_jc_jx_j \le W,  & \label{eq:2}\\
& x_j\in\{0,1\},& t_j\in T.\nonumber
\end{align}
Constraint (\ref{eq:1}) says that the total number of robots used to handle tasks is no more than $|R|=n$. Constraint (\ref{eq:2}) says that the total cost is no more than the budget~$W$. It is easy to see that the above program is an instance of 2KP with unit profit and integer coefficients. 

For the other direction, given an instance of 2KP with unit profit in a decision version, suppose the inputs are rational. It is a ``yes" instance if and only if the version that we construct by multiplying all capacities and sizes with the least common multiplicator is a ``yes" instance. Then the constructed version has integer capacities and sizes, and can be formulated by the above program. Therefore, it is an instance of {\sc MaxTaskWithinTotalBudget} with symmetric costs.
\end{proof}

In Section~\ref{sec:improved-approx} we will show that there is a PTAS for {\sc MaxTaskWithinTotalBudget} with symmetric costs. Next, we drop the symmetry requirement on costs and prove computational hardness results for our three problems in the case of uniform requirements. First we restate Theorem 1 from~\cite{ACGMM18a} in our terminology.

\begin{theorem}[Arulselvan et al.~\cite{ACGMM18a}]\label{thm:matchings}
In an instance where each task has a uniform requirement of 3 robots and each $c_{ij}$ is either 1 or a large $N>W$, finding a minimum cost allocation that handles at least $k$ tasks for a given $k$ is NP-hard.
%The problem of maximizing the number of handled tasks subject to using only unit costs is APX-hard. 
%\hau{do they assume that it is always possible for a given k?}{\agnesnew No.}
\end{theorem}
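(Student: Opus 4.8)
The plan is to reduce from \textsc{Independent Set} on cubic (3-regular) graphs, which is NP-hard; this mirrors the reduction sketched for the related requirement-3 hardness results earlier in this section. Given a cubic graph $G=(V,E)$ and an integer $k$, I would create one task $t_v$ for every vertex $v\in V$, each with the uniform requirement $q=3$, and one robot $r_e$ for every edge $e\in E$. I set $c_{ev}=1$ whenever $e$ is incident to $v$ in $G$, and $c_{ev}=N$ otherwise, where $N>W:=3k$. Because $G$ is cubic, each task $t_v$ has exactly three incident edge-robots, i.e.\ exactly three robots at cost $1$, while every other robot costs $N$ for it. The question posed to the constructed instance is whether there is an allocation handling at least $k$ tasks of total cost at most $W=3k$.

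First I would prove the easy direction: an independent set $I$ of size $k$ yields a cheap allocation. For each $v\in I$ I assign its three incident edge-robots $\{r_e : e\ni v\}$ to $t_v$. Since $I$ is independent, no edge joins two vertices of $I$, so no robot is claimed by two tasks; the allocation handles $k$ tasks and, using only cost-$1$ edges, has total cost exactly $3k=W$.

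The main work is the converse, and this is the step I expect to be the crux. Suppose an allocation handles at least $k$ tasks at cost at most $3k$. Each handled task consumes three distinct robots of cost at least $1$, so the cost is at least $3$ times the number of handled tasks; combined with the budget $3k$ this forces exactly $k$ handled tasks, each served by three cost-$1$ robots and none by an $N$-edge. Hence every handled task $t_v$ is served precisely by its three incident edge-robots (it has no other cost-$1$ robots, and it needs three). If two handled tasks $t_u,t_v$ were adjacent via an edge $e=uv$, both would require the single robot $r_e$, contradicting that each robot serves at most one task. Therefore the handled tasks form an independent set of size $k$ in $G$. Together the two directions show that deciding whether the minimum cost is at most $3k$ decides \textsc{Independent Set}, so computing the minimum cost is NP-hard.

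The one technical point I would still handle is feasibility: the reduction should guarantee that \emph{some} allocation handling $k$ tasks exists, so that the minimum is well defined. Since a cubic graph has $|E|=3|V|/2$ robots, any $k\le |V|/2$ already admits a feasible allocation using arbitrary robots; to be safe for all $k$ I would add $3k$ dummy robots of cost $N$ to every task, which makes handling any $k$ tasks feasible at cost at least $3kN>W$ while leaving the cost-$1$ analysis above untouched. The rest is bookkeeping—checking that $N>W$ and the uniform requirement $q=3$ hold, which they do by construction.
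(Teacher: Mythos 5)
Your proof is correct, and it is essentially the reduction used in the cited source: the paper itself states this theorem without proof as a restatement of Theorem~1 of Arulselvan et al.~\cite{ACGMM18a}, whose argument is exactly this vertex-as-task, edge-as-robot reduction from \textsc{Independent Set} on cubic graphs (the same construction also appears in the authors' related hardness arguments elsewhere in the paper). Your extra care about feasibility via dummy cost-$N$ robots is harmless and not needed, since for $k>|V|/2$ both the allocation instance and the independent-set instance are infeasible anyway.
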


We utilize this result to show hardness for our three problems.

\begin{theorem}\label{thm:hardness_merged}
Each of \textsc{MaxTasksWithinTotalBudget}, \textsc{MaxTasksWithinTaskBudget}, and \textsc{MaxTasksWithinRobotBudget} is NP-hard,
%APX
 even if each task has a uniform requirement of 3 robots and each $c_{ij}$ is either 1 or a large $N>W$. %The problem of maximizing the number of handled tasks is APX-hard for each problem. 
%\hau{there is NP-hard in the previous sentence; and this is APX-hard?}{\agnesnew You're right. The problems are now defined as maximization problems. I corrected the statement.}
\end{theorem}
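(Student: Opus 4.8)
The plan is to reduce from the minimum-cost matching problem of Theorem~\ref{thm:matchings}, exploiting the very restricted cost structure of its instances. Recall that there every task has requirement $q_j = 3$ and every cost $c_{ij}$ is either $1$ or a large value $N$, where we may take $N$ larger than any budget used below. As already observed in the paper, it suffices to consider allocations that assign each handled task exactly $q_j = 3$ robots; hence a handled task costs exactly $3$ if all three of its assigned robots reach it through cost-$1$ edges, and at least $N+2$ otherwise. Consequently the minimum cost of handling $\ge k$ tasks is exactly $3k$ when there is an allocation of $k$ tasks using only cost-$1$ edges, and is at least $N + 2 + 3(k-1) > 3k$ otherwise. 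So the NP-hard decision underlying Theorem~\ref{thm:matchings} is precisely whether $k$ tasks can be handled at total cost $3k$, i.e.\ using cost-$1$ edges only. Each of the three reductions keeps the same instance and merely sets its budget so as to forbid every cost-$N$ edge, collapsing the problem to exactly this decision.

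For \textsc{MaxTasksWithinTotalBudget} I would set the total budget to $W = 3k$. Since $N > 3k$, any allocation using a cost-$N$ edge already exceeds $W$, while handling $k$ tasks along cost-$1$ edges alone costs exactly $3k = W$; thus $\ge k$ tasks can be handled within $W$ if and only if a cost-$1$ allocation of $k$ tasks exists. For \textsc{MaxTasksWithinTaskBudget} I would set the per-task budget to $W = 3$: a task is handled within its budget exactly when its three robots all arrive via cost-$1$ edges (cost $3 \le W$), and becomes infeasible the moment a cost-$N$ edge is used (cost $\ge N + 2 > 3$). For \textsc{MaxTasksWithinRobotBudget} I would set the per-robot budget to $W = 1$; because each robot serves at most one task, its incurred cost is just the single edge cost $c_{ij}$, so a robot may only be assigned along a cost-$1$ edge. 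In all three cases the feasible edges are exactly the cost-$1$ edges, and the maximum number of handleable tasks is $\ge k$ if and only if the original instance admits a cost-$1$ allocation of $k$ tasks.

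Combining these equivalences with Theorem~\ref{thm:matchings} yields NP-hardness for all three problems in the claimed restricted setting (uniform requirement $3$, each $c_{ij}\in\{1,N\}$). The main point to nail down carefully is the reasoning of the first paragraph: I must argue that the hard kernel of Theorem~\ref{thm:matchings} is genuinely the cost-$3k$ (all-unit) decision rather than some finer optimization question. This follows from the clean gap between cost $3k$ and cost $\ge N + 2 + 3(k-1)$: an oracle for ``can $\ge k$ tasks be handled with cost $\le 3k$?'' decides whether the minimum cost equals $3k$, which is exactly the YES/NO distinction that the Arulselvan reduction produces. The remaining steps --- verifying that each of the three budget settings forbids every cost-$N$ edge, and that using exactly $3$ robots per handled task is without loss of generality --- are routine given the observations already made in the excerpt.
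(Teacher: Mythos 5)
Your proposal is correct and follows essentially the same route as the paper: reduce from Theorem~\ref{thm:matchings} by choosing each budget so that every cost-$N$ edge is forbidden, making the three problems coincide with the decision of whether $k$ tasks can be handled using only cost-$1$ edges. The only cosmetic difference is that you set the total budget to $3k$ where the paper uses $3m$; both choices exclude exactly the cost-$N$ edges and otherwise impose no constraint, so the arguments are interchangeable.
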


\begin{proof}
 For each instance satisfying the requirements of Theorem~\ref{thm:matchings}, we set a different budget~$W$, as follows. 
 \begin{itemize}
     \item $W = 3m$ for \textsc{MaxTasksWithinTotalBudget}
     \item $W = 3$ for \textsc{MaxTasksWithinTaskBudget}
     \item $W = 1$ for \textsc{MaxTasksWithinRobotBudget}
\end{itemize}
Each of these budgets ensures that a robot can only be assigned to a task if it costs~1. All other assumptions are analogous in our three problems (e.g., costs, requirements, and $k$) and the problem in Theorem~\ref{thm:matchings}. %\hau{which 4 problems?}. 
 Therefore, a solution to any of our problems is also a solution for the problem in Theorem~\ref{thm:matchings} and vice versa.
\end{proof}

%\textcolor{blue}{[chenhao: I think the above argument is correct and simpler. ({\color{red}Agree.})]}
%{\fahimeh Yes, this is correct and simpler but it is an special case of   Thms~\ref{thm:apx3}, \ref{thm:apxh}, and \ref{thm:apxh2}. Since  the cost are restricted to $1$ or $N$ unlike the Thms~\ref{thm:apx3}, \ref{thm:apxh}, and \ref{thm:apxh2} when the costs are in general. }{\agnesnew Hardness for a special case is stronger than hardness for the general case. Moreover, in the other proofs, there is a 'huge cost' and various small costs, where it doesn't matter at all whether these small costs are unit or not, since the IS problem is not edge-weighted.}

Finally, we show the hardness of \textsc{MaxTasksWithinTaskBudget} for a strict subcase of location-based costs.

%\hau{should be claim below strongly NP-hard? since 3 partition is strongly NP-complete; so no FPTAS for the problem?} {\agnesnew Correct. The objective function is polynomially bounded (at most $3mc_{max}$). Shall we write this out here or we stick to the 'no approximation in the complexity section' principle?}

\begin{theorem}
\label{th:loc}
\textsc{MaxTasksWithinTaskBudget} is NP-hard for location-based costs, even if $c_{ij_1} = c_{ij_2}$ for each robot $r_i$ and tasks $t_{j_1}, t_{j_2}$, and each task has a uniform requirement of 3 robots.
\end{theorem}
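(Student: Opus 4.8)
The plan is to reduce from the strongly NP-complete \emph{3-Partition} problem. Recall that a 3-Partition instance consists of $3k$ positive integers $a_1,\dots,a_{3k}$ and a bound $B$ with $\sum_{i} a_i = kB$ and $B/4 < a_i < B/2$ for every $i$; the question is whether the multiset $\{a_1,\dots,a_{3k}\}$ can be partitioned into $k$ triples each summing to exactly $B$.

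The first observation is that the two restrictions in the statement---location-based costs together with $c_{ij_1}=c_{ij_2}$ for all tasks---are jointly easy to realize. Place all $m$ tasks at a single point $p$ of the plane (say the origin), and place robot $r_i$ at distance $a_i$ from $p$, e.g.\ at $(a_i,0)$. Then $c_{ij} = \mathrm{dist}(r_i,t_j) = a_i$ is independent of $j$, so the costs are simultaneously location-based and task-independent, and every prescribed non-negative cost vector is realizable in this way. Under task-independent costs the cost of handling a task with an assigned robot set $S$ is simply $\sum_{i\in S} a_i$, so the tasks become interchangeable and the problem degenerates into a pure grouping problem on the robots.

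Hence I take as the reduced instance $n=3k$ robots with costs $a_1,\dots,a_{3k}$, all $m:=k$ tasks placed at the origin with uniform requirement $q_j=3$, and task budget $W:=B$; the decision question is whether at least $k$ tasks can be handled. By the earlier observation that an optimal allocation assigns each handled task either $0$ or exactly $q_j$ robots---and since all costs here are strictly positive---I may assume that every handled task receives exactly $3$ robots, so that handling a task with robot set $S$ is feasible iff $|S|=3$ and $\sum_{i\in S}a_i\le B$.

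The equivalence then follows from a counting argument, which is the only point that needs care. If the 3-Partition instance is a yes-instance, the $k$ triples each sum to exactly $B=W$, so each can be assigned to one task, handling all $k$ tasks. Conversely, handling $k$ tasks requires $3k$ disjoint robots; since only $3k$ robots exist, this uses every robot and forms exactly $k$ triples $S_1,\dots,S_k$ with $\sum_{i\in S_j}a_i\le W$ for each $j$. Summing over the triples gives $\sum_j \sum_{i\in S_j} a_i \le kW = kB$, while the left-hand side equals $\sum_i a_i = kB$; therefore every triple sums to exactly $B$, yielding a valid 3-Partition. The main (and only mildly delicate) step is this forcing of exact sums: it hinges on choosing $m=k$ and $n=3k$ so that handling $k$ tasks saturates the robot pool and the global cost simultaneously, converting the ``at most $W$ per task'' constraints into ``exactly $B$ per triple.'' Since 3-Partition is strongly NP-complete and all constructed numbers are polynomially bounded, this establishes (strong) NP-hardness.
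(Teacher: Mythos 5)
Your proof is correct and follows essentially the same reduction as the paper: both reduce from strongly NP-complete 3-Partition with $k$ tasks of requirement $3$, $3k$ robots of task-independent cost $a_i$, task budget $W=B$, and both use the same saturation argument ($\sum_i a_i = kB$ forces every triple to sum to exactly $B$). Your explicit geometric realization (all tasks at the origin, robot $r_i$ at distance $a_i$) is a welcome extra detail that the paper leaves implicit, but the argument is otherwise the same.
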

\begin{proof}
%\hau{what is the meaning of perfect assignment? maybe use a different wording?}
We will show that deciding whether there exists an assignment handling all tasks below cost $W$ at each task is NP-complete. We reduce the strongly NP-complete 3-partition problem~\cite{GaJo79a} to our problem. The input to this problem is a multiset $S=s_1, s_2, \ldots, s_{3k}$ of positive integers. The output is whether or not there exists a partition of $S$ into $k$ triplets such that the sum of the numbers in each triplet is equal to $T = \frac{s_1 + s_2 + \ldots + s_{3k}}{k}$. %The problem remains NP-complete under the restriction that every integer in $S$ is strictly between $\frac{T}{4}$ and $\frac{T}{2}$.

Our instance will have $k$ tasks of requirement 3 each, $3k$ robots, and a task budget $W=T$. To each integer $s_i \in S$ we create a robot $r_i$ and assign cost $c_{ij}=s_i$ for all tasks~$t_j$. Each 3-partitioning of $S$ trivially translates into an assignment, where a handled task represents a set of 3 integers with total cost $W$ and vice versa. Notice that since $kT = s_1 + s_2 + \ldots + s_{3k}$, an assignment staying below the task budget $T$ at each vertex must reach $T$ at each task.
\end{proof}

\section{Approximation}
In this section, we consider approximation algorithms for our problems. We show that a simple greedy algorithm can provide a tight approximation of $\max_i q_i + 1$ for all problems. 
When $q_i$'s are all constants, this approximation can be improved to $\frac{1}{2}(\max_i q_i +1)$. 
For {\sc MaxTaskWithinTotalBudget}, when costs are symmetric, there is a polynomial time approximation scheme (PTAS). 

We say an algorithm $\mathcal A$ has approximation ratio $\alpha\ge1$ for a problem, if for any instance $I$ of this problem, the number of tasks $\mathcal A(I)$ handled by the algorithm times $\alpha$ is no less than the maximum possible number of tasks $opt(I)$ that can be handled, i.e., $\alpha\cdot \mathcal A(I)\ge opt(I)$.

%\hau{we need add a few sentences here}
\subsection{A Greedy Algorithm}%for General Cases}

We first present a simple greedy algorithm for the objective of {\sc MaxTaskWithinTotalBudget}, as shown in Algorithm~\ref{alg:greedy}, and then we show how to adjust it for the other two problems. 
%\hau{how do we know it is almost the best possible one? is there a lower bound for this?}

\begin{algorithm}[t]
\caption{\hspace{-2pt}{ A Greedy Algorithm}}
\label{alg:greedy}
\begin{algorithmic}[1]
\REQUIRE A multi-robot task allocation problem $(R,T,q,C)$.
\ENSURE An assignment $A = \{(t_1, R_1), \cdots, (t_k, R_k)\}$
\STATE Initially, set $A = \emptyset$.

\WHILE{$T \neq \emptyset$}
\FOR{$t_j \in T$}
\STATE Set
$
R_j = \begin{cases}
\arg\min\limits_{R'\subseteq R: |R'| = q_j} \sum_{r_i \in R'} c_{ij} & \text{ if $|R| \ge q_j$}\\
\emptyset & \text{ otherwise.} 
\end{cases}
$

\STATE Set
$
c(t_j, R_j) = \begin{cases}
\sum_{r_i \in R_j} c_{ij} & \text{ if $R_j \neq \emptyset$}\\
+\infty & \text{ otherwise.}
\end{cases}
$
\ENDFOR
\STATE Set \begin{align*}
(t_j^*, R^*_j) = \arg\min_{t_j \in T} c(t_j, R_j),
\end{align*}
where ties are broken arbitrarily.
\IF{$\sum_{(t_j,R_j) \in A} \sum_{r_i\in R_j} c_{ij} + c(t_j^*, R^*_j) \le W$} \label{step:condition}
\STATE $A = A \cup \{(t_j^*, R_j^*)\}$.
\STATE $T = T \setminus \{t_j^*\}$ and $R = R \setminus R^*_j$.
\ELSE
\STATE \textbf{break}
\ENDIF

% \STATE Define Condition $\bC$ as follows. 

% \begin{itemize}
% \item For {\sc MaxTaskWithinTotalBudget}:
% \begin{align*}
% \bC = 

% \end{align*}
% \item For {\sc MaxTasksWithinTaskBudget}:
% \begin{align*}
% \bC = c(t_j^*, R^*) \le W 
% \end{align*}
% \item For {\sc MaxTaskWithinRobotBudget}:
% \begin{align*}
% \bC & = & \exists (t_j, R_j) \in T\times 2^R \quad \text{ s.t. } \\
% &&  |R_j| \ge q_{j} \quad \& \quad c_{ij} \le W, \text{ $\forall r_i \in R_j$} 
% \end{align*}
% \end{itemize}

% \STATE Select the task-robots pair from remaining tasks and robots $(t_j^*, R^*) \in T\times 2^R$ that has the smallest cost, i.e.,
% \begin{align*}
% (t_j^*, R^*) &= \arg\min_{j \in T, R' \subseteq R}\{ \sum_{r_i\in R'}c_{ij} \mid |R'| = q_{j}  \\
% & \quad\quad\quad\quad\quad\quad \text{ and }c_{ij} \le W \text{ for all $r_i \in R'$}\}.
% \end{align*}
% \STATE $A = A \cup \{(t_j^*, R_j^*)\}$.
% \STATE $T = T \setminus \{t_j^*\}$ and $R = R \setminus R^*$.
\ENDWHILE

% \STATE Define Condition $\bC$ as follows. 

% \begin{itemize}
% \item For {\sc MaxTaskWithinTotalBudget}:
% \begin{align*}
% \bC &= &  \exists (t_j, R_j) \in T\times 2^R \quad \text{ s.t. } \\
% &&   |R_j| \ge q_{j} \quad \& \quad \sum_{(t_j,R_j) \in A} \sum_{r_i\in R_j} c_{ij} \le W 
% \end{align*}
% \item For {\sc MaxTasksWithinTaskBudget}:
% \begin{align*}
% \bC &= & \exists (t_j, R_j) \in T\times 2^R \quad \text{ s.t. } \\
% &&  |R_j| \ge q_{j} \quad \& \quad \sum_{r_i\in R_j} c_{ij} \le W
% \end{align*}
% \item For {\sc MaxTaskWithinRobotBudget}:
% \begin{align*}
% \bC & = & \exists (t_j, R_j) \in T\times 2^R \quad \text{ s.t. } \\
% &&  |R_j| \ge q_{j} \quad \& \quad c_{ij} \le W, \text{ $\forall r_i \in R_j$} 
% \end{align*}
% \end{itemize}
% \WHILE{Condition $\bC$ is satisfied}
% \STATE Select the task-robots pair from remaining tasks and robots $(t_j^*, R^*) \in T\times 2^R$ that has the smallest cost, i.e.,
% \begin{align*}
% (t_j^*, R^*) &= \arg\min_{j \in T, R' \subseteq R}\{ \sum_{r_i\in R'}c_{ij} \mid |R'| = q_{j}  \\
% & \quad\quad\quad\quad\quad\quad \text{ and }c_{ij} \le W \text{ for all $r_i \in R'$}\}.
% \end{align*}
% \STATE $A = A \cup \{(t_j^*, R_j^*)\}$.
% \STATE $T = T \setminus \{t_j^*\}$ and $R = R \setminus R^*$.
%\ENDWHILE
\RETURN $A$
\end{algorithmic}
\end{algorithm}
       
	   %\noindent{\bf Greedy Algorithm:}
	   %\begin{itemize}
	   %%\item Initially, let $T=[m]$ and $R = [n]$;
	   %% \item Denote the total distance by $Q = \sum_{j \in T, i\in R} c_{ij}$;
	   %% \item For any $j \in T, i\in R$,  if $c_{ij} > W$, reset $c_{ij} = Q+1$.
	   %\item Iteratively select the task-robots pair from remaining tasks and robots $(j^*, R^*) \in T\times 2^R$ that has the smallest cost, i.e.,
	   %\[
	   %(j^*, R^*) = \arg\min_{j \in T, R' \subseteq R: |R'| = q_{j} \text{ and }c_{ij} \le W} \sum_{i\in R'}c_{ij},
	   %\]
	   %and set $T = T \setminus \{j^*\}$ and $R = R \setminus R^*$, until 
	   %\begin{itemize}
	   %\item the total cost exceeds $W$ for the objective of {\sc MaxTaskWithinTotalBudget};
	   %\item the task cost exceeds $W$ for the objective of {\sc MaxTasksWithinTaskCost}; 
	   %\item no feasible $(j^*, R^*)$ pairs for the objective of {\sc MaxTaskWithinRobotCapacity}.
	   %\end{itemize}
	   %\end{itemize}

    %{\agnesnew Could you please tell me what $k$ is in the proof below? It appears to be the number of robots assigned in $A$, but the approximation is stated for the number of tasks.} {\color{blue} We use $k = |A|$ to represent the number of tasks that are completed under the greedy algorithm. For each $(i, R_i) \in A$, $(i, R_i) \in [m]\times 2^{[n]}$. The definition of $O$ and $l$ are similar. I can rewrite this part and double check!} {\agnesnew Thank you!}

\begin{theorem}\label{th:boapprox}
For {\sc MaxTaskWithinTotalBudget}, Greedy algorithm can be implemented in polynomial time and is $(q^*+1)$-approximation, where $q^* = \max_{j \in T} q_j$.
\end{theorem}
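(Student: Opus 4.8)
The statement bundles two claims---polynomial running time and the $(q^*+1)$ ratio---so I would treat them in turn. For the running time, I would observe that the outer \textbf{while} loop deletes one task from $T$ per successful iteration and otherwise breaks, so it runs at most $m+1$ times; inside an iteration, computing $R_j$ for a fixed $t_j$ only means selecting the $q_j$ cheapest of the at most $n$ currently free robots, which costs $O(n\log n)$ after sorting the column $(c_{ij})_i$. Sweeping over all $t_j\in T$ gives $O(mn\log n)$ per iteration and $O(m^2 n\log n)$ overall, which is polynomial.

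For the ratio, let $G$ be the set of tasks Greedy handles, $k=|G|=\mathcal A(I)$, and let $R_{\mathrm{used}}$ be the robots it assigns, so that $|R_{\mathrm{used}}|=\sum_{g\in G}q_g\le q^* k$. Fix an optimal allocation $O$ with $|O|=opt(I)$; for $t\in O$ write $R_t$ for its $q_t$ assigned robots (the $R_t$ are pairwise disjoint) and $\gamma_t=\sum_{r\in R_t}c_{rt}$, so $\sum_{t\in O}\gamma_t\le W$. I would split $O=O_1\sqcup O_2$, where $O_1=\{t\in O: R_t\cap R_{\mathrm{used}}\neq\emptyset\}$ and $O_2$ is the rest. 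Charging each $t\in O_1$ to a shared robot is injective by disjointness of the $R_t$, so $|O_1|\le|R_{\mathrm{used}}|\le q^* k$; the whole game then reduces to proving $|O_2|\le k$, which yields $opt(I)\le(q^*+1)k$.

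Proving $|O_2|\le k$ is the step I expect to be the main obstacle, because the naive argument is lossy: treating every $O_2$ task merely as a candidate surviving to the end of Greedy with cost at least $c^*$ (the cost of the cheapest task Greedy could not afford) only gives $|O_2|\le 2k$ and hence a weaker $(q^*+2)$ ratio. The slack comes from $O_2$ tasks that Greedy itself handles using different, cheaper robots. To recover $(q^*+1)$ I would exploit the prefix structure of Greedy's picks. Let $c_1,\dots,c_k$ be the costs Greedy pays in order, so $\sum_{i=1}^k c_i\le W$ and the stopping rule gives $\sum_{i=1}^k c_i + c^*>W$; order the $O_2$ tasks by cost $\gamma_{(1)}\le\cdots\le\gamma_{(|O_2|)}$. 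Since each $O_2$ task uses only robots outside $R_{\mathrm{used}}$, its configuration stays robot-feasible throughout the run and is invalidated only when Greedy handles that very task, i.e.\ at most one invalidation per step. Hence before Greedy's $i$-th pick at least one of the $i$ cheapest $O_2$ configurations is still a legal candidate, forcing $c_i\le\gamma_{(i)}$ for $i\le k$; and after all $k$ picks at least one of the $k+1$ cheapest survives as a legal candidate, forcing $\gamma_{(i)}\ge c^*$ for $i>k$. Assuming $|O_2|>k$ for contradiction and summing, $W\ge\sum_{t\in O_2}\gamma_t\ge\sum_{i=1}^k c_i+(|O_2|-k)c^*$, which together with $\sum_{i=1}^k c_i+c^*>W$ gives $(|O_2|-k)c^*<c^*$, i.e.\ $|O_2|\le k$, a contradiction.

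Finally I would dispatch the degenerate cases: if Greedy empties $T$ then $k=m\ge opt(I)$ and the ratio is trivial; and if Greedy halts because no remaining task has enough free robots (so $c^*=+\infty$), then every $O_2$ configuration must already have been invalidated, again giving $|O_2|\le k$. Combining $|O_1|\le q^* k$ with $|O_2|\le k$ yields $opt(I)\le(q^*+1)\,\mathcal A(I)$, which is the claimed approximation ratio.
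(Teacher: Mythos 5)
Your proof is correct, and its skeleton---splitting the optimal tasks into those that share a robot with Greedy's assignment ($O_1$, bounded by $q^*k$ via an injective charge to used robots) and the rest ($O_2$, bounded by $k$)---is essentially the paper's charging argument, which partitions $O$ into the tasks $\cup_j O_j$ impeded by some greedy pick (at most $q^*+1$ per pick) and a remainder $O'$. Where you genuinely diverge is in the second bucket. The paper bounds $|O'|$ by $|A_2|$, the number of greedy picks that impede nothing, using a pairwise cost comparison plus a budget-transfer inequality; you instead bound $|O_2|$ by all of $k$ via the prefix lemma $c_i\le\gamma_{(i)}$ combined with the stopping inequality $\sum_i c_i + c^* > W$, which you correctly identify as the step needed to avoid the lossy $(q^*+2)$ bound. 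Your version is arguably the more careful of the two: the ``at most one invalidation per step'' observation makes precise why a cheap surviving $O_2$ configuration is always a legal candidate for Greedy, including the delicate case of tasks that Greedy handles with a different, cheaper robot set than OPT uses---a case the paper's $|A_2|\ge|O'|$ argument treats only informally. Both routes yield exactly the $(q^*+1)$ ratio, and your explicit dispatch of the degenerate terminations ($T$ emptied, or $c^*=+\infty$) is a welcome addition rather than a necessity.
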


\begin{proof}
We first show that the greedy algorithm can be implemented in polynomial time. 
There are at most $m$ rounds for the {\bf while} loop because for each round, either the size of $T$ decreases by 1 or the algorithm terminates. 
Within each round of the {\bf while} loop, it suffices to check that the {\bf for} loop can be done in polynomial time. 
This is because to find the lowest completing cost for each task $t_i$, we only need to sum the lowest $q_j$ costs between $t_j$ and all remaining robots. %\hau{this part you can mention sorting}

% and for each round, to check Condition $\bC$ it is sufficient to compare the lowest completing cost for each remaining task $t_j$ by summing up the lowest $q_j$ costs between $t_j$ and all remaining robots. Specifically, for {\sc MaxTaskWithinTotalBudget}, we check if the resulting cost exceeds $W$ by adding this lowest completing cost; for {\sc MaxTasksWithinTaskBudget}, we check if the lowest completing cost itself exceeds $W$; for {\sc MaxTasksWithinTaskBudget}, as long as there is a pair satisfies $\bC$, it can always be added to the current solution. 

Next, we prove the approximation ratio.
Suppose the solution of the greedy algorithm is $A = \{(t_1, R_1), \cdots, (t_k, R_k)\}$ where $k = |A|$ is the number of tasks that are handled, and each $(t_j, R_j) \in T \times 2^{R}$ represents task $j$ is handled by robots in $R_j$ in solution $A$. 
Similarly, let an optimal solution be $O = \{(t_{j_1}, R_{j_1}'), \cdots, (t_{j_l}, R_{j_l}')\}$.
In the following, we prove that $k \ge \frac{l}{q^*+1}$.

To see the difference between $A$ and $O$, we observe that, 
for each $1 \le j \le k$, $|R_j| = q_j \le q^*$,
then task $t_j$ can impede at most $q^*+1$ tasks in $O$ including $t_j$ itself. Formally, let $O_0 = \emptyset$, and for each $t_j \in A$, we recursively define $O_j \subseteq O \setminus \cup_{f \le j-1} O_f$ as the tasks in $O$ that are not able to be handled because of $t_j$. 
That is 
\[
O_j = \{ (t_{j'}, R_{j'}') \in O \setminus \cup_{f \le j-1} O_f \mid R_{j'}' \cap R_j \neq \emptyset \}.
\]
Note that by our definition, tasks in $O_j$ cannot be impeded by 
$\{t_1, \cdots, t_{j-1}\}$, i.e., $R_{j'}' \cap (\cup_{f < j} R_{f}) = \emptyset$ for all $(t_{j'}, R_{j'}') \in O_j$.
Moreover, $|O_j| \le q^* + 1$. 
We further partition $A$ into $A_1$ and $A_2$, where
$A_1$ is the set of tasks whose selection impedes tasks in $O$, i.e., 
\[
A_1 = \{(t_j, R_j) \in A \mid |O_j| \ge 1\}
\]
and $A_2 = A\setminus A_1$. 
Accordingly, we claim the following.
\begin{align}\label{eq:(1)}
    (q^*+1) \cdot |A_1| &\ge  |\cup_{(t_j,R_j)\in A_1} O_j| 
\end{align}
and
\begin{align} \label{eq:(2)}
    \sum_{z \in R_j} c_{zj} &\le \sum_{z \in R_{j'}'} c_{zj'},
\end{align}
% \textcolor{red}{[can we replace $\sum_{z \in R_j} c_{zj}$ with $c(t_j,R_j)$?]}
for each $(t_j, R_j) \in A_1$ and any $(t_{j'}, R_{j'}') \in O_j$.
% \begin{itemize}
%     \item (1) $(q^*+1) \cdot |A_1| \ge  |\cup_{(t_j,R_j)\in A_1} O_j|$; and
%     \item (2) for each $(t_j, R_j) \in A_1$ and any $(t_{j'}, R_{j'}') \in O_j$, $\sum_{z \in R_j} c_{zj} \le \sum_{z \in R_{j'}'} c_{zj'}$.
% \end{itemize}
	   Inequality (\ref{eq:(1)}) holds because $|O_j| \le q^*+1$ for any $j$.
	   Inequality (\ref{eq:(2)}) holds because the algorithm always selects a feasible task-robot pair with the smallest cost. Specifically, if $\sum_{z \in R_j} c_{zj} > \sum_{z \in R_{j'}'} c_{zj'}$, since $(t_{j'}, R_{j'}')$ is not impeded by $\{t_1, \cdots, t_{j-1}\}$, then $(t_{j'}, R_{j'}')$ should be added to $A$ before $(t_j, R_j)$, a contradiction. 
	   %Note that since $O_i \subseteq O \setminus \cup_{f \le i-1} O_f$, $R_j \cap (\cup_{f < i} R_{f}) = \emptyset$ for any $f < i$.
	   Let $O' = O \setminus \cup_{(t_j,R_j)\in A_1} O_j$. 
	   Next, we show that 
	   \begin{align}\label{eq:(3)}
	       |A_2| \ge |O'|.
	   \end{align}
	   Before we prove Inequality (\ref{eq:(3)}), we observe that combing the definition of $A_1$ and Inequality (\ref{eq:(2)}), we have the following: 
	   \[
	   \sum_{(t_j, R_j) \in A_1} \sum_{z \in R_j} c_{zj} \le \sum_{(t_j, R_j) \in A_1} \sum_{(j',R_{j'}') \in O_j} \sum_{z \in R_{j'}'} c_{zj'},
	   \]
	   which means the available cost budget for $A_2$ is no smaller than for $O'$.
	   % Thus  since $A_2 \cap O' = \emptyset$, 
	   Moreover, as the greedy algorithm always selects the pair with the smallest cost, 
	   for any $(t_j, R_j)\in A_2$ and any $(t_{j'}, R_{j'}') \in O'$, $\sum_{z \in R_j} c_{zj} \le \sum_{z \in R_{j'}'} c_{zj'}$. Otherwise, $(t_{j'}, R_{j'}') \in O'$ should be added to $A$ before $(t_j, R_j)\in A_2$, recalling that $O'$ are not impeded by $A$.
	   Thus $|A_2| \ge |O'|$.

	   Combing Inequalities (\ref{eq:(1)}) and (\ref{eq:(3)}), 
	   \[
	   k = |A_1| + |A_2| \ge \frac{|\cup_{(t_j,R_j)\in A_1} O_j|}{q^*+1} + |O'| \ge \frac{l}{q^*+1},
	   \]
	   which completes the proof of Theorem \ref{th:boapprox}.
	   % $(q^*+1) \cdot |A_1| \ge  |\cup_{i\in A_i} O_i|$ and $|A_2| \ge |O_0|$, $k \ge \frac{l}{q^*+1}$.
	   % For each $1 \le i \le l$, let $j_i$ be the first task in $A$ that impedes $i$ to be completed.
	   \end{proof}

	   \begin{corollary}\label{coro:boapprox}
	   For {\sc MaxTasksWithinTaskBudget} and {\sc MaxTaskWithinRobotBudget}, there are Greedy algorithms that are $(q^*+1)$-approximation, where $q^* = \max_{j \in T} q_j$.
	   \end{corollary}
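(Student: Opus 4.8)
The plan is to adapt Algorithm~\ref{alg:greedy} to each budget variant and then re-run the charging argument from the proof of Theorem~\ref{th:boapprox}, exploiting the fact that a per-task (respectively per-robot) budget makes task eligibility a purely local condition. For {\sc MaxTasksWithinTaskBudget} I would, in each round and for every remaining task $t_j$, form its cheapest completing set $R_j$ from the $q_j$ lowest-cost available robots, declare $t_j$ \emph{eligible} only when $\sum_{r_i \in R_j} c_{ij} \le W$, add the eligible task of smallest completing cost (ties broken arbitrarily), and remove its robots; the loop stops when no eligible task remains. For {\sc MaxTaskWithinRobotBudget} I would first delete every edge with $c_{ij} > W$ (such an edge can never appear in a feasible solution, since each robot does at most one task), so that $t_j$ is eligible exactly when $q_j$ of its surviving robots are still available, and again add eligible tasks greedily until none remain. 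As an alternative for the robot budget, one may simply invoke the reduction to {\sc MaxTaskWithinTotalBudget} described after the problem definitions, which preserves $q^*$ and the set of feasible solutions (hence the optimum), and apply Theorem~\ref{th:boapprox} verbatim.

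Both adapted algorithms run in polynomial time for the same reason as in Theorem~\ref{th:boapprox}: each round either handles one task or terminates, and within a round the cheapest completing set of each task is obtained by summing its $q_j$ smallest available costs. For the ratio I would reuse the notation of that proof: let $A$ be the greedy output and $O$ an optimal assignment, with generic pairs $(t_j,R_j)\in A$ and $(t_{j'},R_{j'}')\in O$. The combinatorial core is unchanged, namely that each handled task $t_j$ uses $|R_j|=q_j\le q^*$ robots and that, because $O$ is a valid assignment, every robot lies in at most one pair of $O$.

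The step that departs from Theorem~\ref{th:boapprox}, and the one I expect to carry the weight, is showing that \emph{all} of $O$ can be charged to $A$ at a rate of $q^*+1$ per greedy task; here the budget-comparison step that bounded $|A_2|\ge|O'|$ is replaced by a maximality argument. I would charge each optimal pair $(t_{j'},R_{j'}')$ as follows. If $t_{j'}\in A$, charge it to the greedy copy of $t_{j'}$. Otherwise $t_{j'}$ was never handled, and I claim some robot of $R_{j'}'$ is used by $A$: if not, all $q_{j'}$ robots of $R_{j'}'$ are still available when the loop ends, and since $(t_{j'},R_{j'}')$ is feasible in $O$ (so $\sum_{r_i \in R_{j'}'} c_{ij'}\le W$ for the task budget, or every robot of $R_{j'}'$ survives the pruning for the robot budget), the cheapest available completing set of $t_{j'}$ is also feasible and $t_{j'}$ would remain eligible, contradicting termination. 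Hence such a $t_{j'}$ is charged to some greedy task $t_j$ with $R_j\cap R_{j'}'\neq\emptyset$.

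Counting the charges received by a fixed greedy task $t_j$ then closes the argument: at most one optimal task is charged by the first rule (its own optimal copy), and since each of the $q_j\le q^*$ robots of $R_j$ lies in at most one optimal pair, at most $q^*$ optimal tasks are charged by the second rule, for a total of at most $q^*+1$. Therefore $|O|\le (q^*+1)\,|A|$, i.e.\ $|A|\ge |O|/(q^*+1)$, which is the claimed $(q^*+1)$-approximation. I expect the main obstacle to be stating the eligibility and termination rules precisely enough that the maximality contradiction is airtight; in particular, one must make explicit that it is the greedy's choice of the \emph{cheapest} completing set that guarantees an untouched optimal task would still have been affordable, and one must handle the bookkeeping for optimal tasks that the greedy itself already handled via the self-charge that supplies the ``$+1$''.
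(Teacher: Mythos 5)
Your proposal is correct and follows essentially the same route as the paper: the paper likewise changes the greedy's acceptance test to the local condition $c(t_j^*,R_j^*)\le W$ for the task budget (and handles the robot budget by rewriting over-budget costs so the total-budget greedy applies, i.e.\ your alternative reduction), and its claim that $O'=\emptyset$ is exactly your maximality observation that an optimal task none of whose robots is touched by the greedy would still be eligible at termination. Your direct charging scheme (one self-charge plus at most $q^*$ robot-charges per greedy task) is just a cleaner repackaging of the paper's bound $|O_j|\le q^*+1$ combined with $O'=\emptyset$.
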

	   \begin{proof}
	   Regarding {\sc MaxTasksWithinTaskBudget}, we modify Algorithm \ref{alg:greedy} by exchanging the condition in Step \ref{step:condition} with
	   \[
	   c(t_j^*, R^*_j) \le W.
	   \]
	   Then, we note that both Inequalities (\ref{eq:(1)}) and (\ref{eq:(2)}) in the proof of Theorem \ref{th:boapprox} hold for the same reason. Therefore, it suffices to prove Inequality (\ref{eq:(3)}), which holds more strongly than under the objective of {\sc MaxTaskWithinTotalBudget}. Actually, it must be that $O' = \emptyset$; otherwise, there is a feasible pair $(j, R'_j) \in O'$ that is not impeded by any pair in $A$, then it should be selected by the algorithm, which is a contradiction. 
	   
	   Regarding {\sc MaxTaskWithinRobotBudget},  we can first set all the robot-task cost that are greater than $W$ to $mn\max_{t_i,r_j} c_{ij}$ and then use Algorithm \ref{alg:greedy} with budget $mn\max_{t_i,r_j} c_{ij}$.
	   \end{proof}
	   
	   We note that an alternative way to obtain the approximation result for {\sc MaxTaskWithinRobotBudget} is to reduce our problem to the {\em maximum weight many-to-one matching with lower and upper quotas} defined in \citet{ACGMM18a}.
	   Roughly, in the induced bipartite graph between tasks and robots, all edges with costs more than $W$ are deleted, the lower and upper quotas for each task $t_j$ are set to be $q_j$, and edge weight between each robot and task $t_j$ is set to $\frac{1}{q_j}$. The quotas ensure that at least $q_j$ robots to be assigned to task $t_j$ if any, while the total weights of a handled task is 1 and thus the weight of an assignment is equal to the number of handled tasks. However, we note that this approach cannot be generalized to deal with the other two objectives {\sc MaxTaskWithinTotalBudget} and  {\sc MaxTasksWithinTaskBudget}.
	   %\hau{what results did you get from doing this?} {\agnesnew Nothing besides the approximation guarantee for {\sc MaxTaskWithinRobotBudget}. I guess this is a clear pointer that this proof indeed has original contribution. Is there a better way to phrase this? Does it make more harm than good?}
	   
	   Next, we prove that, via the following theorem, the approximation ratio in Theorem \ref{th:boapprox} (as well as in Corollary \ref{coro:boapprox}) for the general case is almost the best we are able to guarantee.

\begin{theorem}
\label{the:hard:general}
Even with uniform requirements, {\sc MaxTaskWithinTotalBudget},  {\sc MaxTasksWithinTaskBudget} and {\sc MaxTaskWithinRobotBudget} cannot be approximated within a factor of $({q^*})^{1-\epsilon}$ for any fixed $\epsilon>0$  by any polynomial-time algorithm, where $q^* = \max_{j \in T} q_j$, unless P = NP.
\end{theorem}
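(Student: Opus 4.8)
The plan is to reduce from the \emph{Maximum Independent Set} problem, which by the results of H\aa stad and Zuckerman cannot be approximated within a factor of $N^{1-\epsilon}$ on $N$-vertex graphs for any fixed $\epsilon>0$ unless $\mathrm{P}=\mathrm{NP}$. The crucial design choice is to build an instance in which the uniform requirement equals the number of vertices, so that $q^*=N$ and a preserved $(q^*)^{1-\epsilon}$ guarantee translates verbatim into an $N^{1-\epsilon}$ guarantee for Independent Set. Since all three problems share the same combinatorial core (choosing which tasks to handle), I would give one construction and only vary the budget, as in the proof of Theorem~\ref{thm:hardness_merged}.

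Given a graph $G=(V,E)$ with $|V|=N$, I would create one task $t_v$ per vertex $v$, all with the uniform requirement $q=N$. For each edge $e=\{u,v\}$ I introduce a shared \emph{edge-robot} $r_e$ that is cheap (cost $1$) for exactly the two tasks $t_u,t_v$; for each vertex $v$ I additionally create $q-\deg(v)$ \emph{private} robots that are cheap only for $t_v$. Every remaining robot--task pair gets the prohibitively large cost $N'=N^2+1$. This gives each task $t_v$ \emph{exactly} $q$ cheap robots (its $\deg(v)$ edge-robots together with its $q-\deg(v)$ private robots), and the total number of robots is $|E|+\sum_v(q-\deg(v))=N^2-|E|$, so the reduction is polynomial. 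The budget is then set per problem: $W=N^2$ for {\sc MaxTaskWithinTotalBudget}, $W=N$ for {\sc MaxTasksWithinTaskBudget}, and $W=1$ for {\sc MaxTaskWithinRobotBudget}. Since $N'>N^2\ge W$ in every case, using even a single expensive robot to help handle a task already violates the relevant budget, so no budget-feasible solution ever uses an expensive assignment.

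The heart of the argument is to show that the reduction is \emph{objective preserving}, not merely decision preserving. Because expensive robots are never used, a handled task $t_v$ is forced to occupy all $q$ of its cheap robots, in particular \emph{all} of its edge-robots. Hence two handled tasks $t_u,t_v$ with $\{u,v\}\in E$ would both demand the single robot $r_{\{u,v\}}$, contradicting that each robot serves at most one task; so the handled tasks always form an independent set of $G$. Conversely, any independent set $S$ is realised by assigning each $t_v$ with $v\in S$ its private and edge-robots: independence guarantees no edge-robot is claimed twice, and the cost stays within budget (handling $|S|\le N$ tasks costs $q|S|\le N^2=W$ in the total-budget case, while each handled task costs $q=N=W$, resp.\ each used robot costs $1=W$, in the other two). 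Thus the maximum number of handled tasks equals $\alpha(G)$, and every feasible solution of value $k$ yields an independent set of size $k$ and vice versa.

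Finally I would compose the guarantees: a polynomial-time $(q^*)^{1-\epsilon}$-approximation for any of the three problems, run on this instance where $q^*=N$, would produce an independent set within a factor $N^{1-\epsilon}$ of optimal, contradicting the inapproximability of Maximum Independent Set unless $\mathrm{P}=\mathrm{NP}$. The main obstacle, and the step requiring the most care, is calibrating the cost $N'$ and the three budgets so that simultaneously (i) every expensive assignment is individually budget-violating and (ii) all $N$ tasks remain affordable whenever they are independent; the subtle point is giving each task \emph{exactly} $q$ cheap robots rather than at least $q$, since this is what forces each handled task to occupy all its edge-robots and thereby makes the reduction value-preserving instead of only decision-preserving.
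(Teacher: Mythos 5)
Your proposal is correct and follows essentially the same route as the paper: a reduction from Maximum Independent Set in which each vertex becomes a task with uniform requirement $q^*=|V|$, each vertex gets $|V|$ dedicated cheap robots with the two robots of an edge merged into one shared robot (your ``edge-robots''), and the budget is calibrated per problem so that expensive assignments are infeasible, making the number of handleable tasks equal $\alpha(G)$. The only cosmetic difference is that the paper uses cost $0$ for cheap robots and budget $W=0$, whereas you use cost $1$ and budgets $N^2$, $N$, $1$; the argument is otherwise identical.
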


% {\agnesnew I was wrong about this result being extendable to uniform requirements. The IS inapproximability result doesn't hold for cubic graphs.}

\begin{proof}
We first consider {\sc MaxTaskWithinTotalBudget}.
We reduce from Independent Set (IS) Problem~\cite{hastad1999clique}.
Given an IS problem $G=(V,E)$ with $|V|$ nodes and $|E|$ edges, we construct the following instance with $|V|$ tasks, $|V|^2 - |E|$ robots, and budget $W=0$.
	   
For each $i \in V$, we construct a task $t_i$ with $q_i = |V|$ and $n$ robots $i_1, \cdots, i_{|V|}$.
All robots $i_1, \cdots, i_{|V|}$ have cost 0 to $t_i$ and 1 to any other task.
Then, for each $(i,j) \in E$, we merge robots $i_j$ and $j_i$ and reset the cost for the merged robot to do both $t_i$ and $t_j$ to be 0 and the other tasks to be 1.

Note that since $W=0$, each task $t_i$ can only be handled by the robots whose cost to do $t_i$ is 0. Moreover, for $t_i$ to be handled, it requires at least $|V|$ robots.
Thus maximizing the number of handled tasks within total cost 0 is equivalent to maximizing the size of an independent set in graph $G$.
Since IS cannot be approximated better than $|V|^{1-\epsilon}$ for any $\epsilon > 0$ \cite{hastad1999clique}, our problem cannot be approximated better than $({q^*})^{1-\epsilon}$, where $q^* = \max_{j} q_j = |V|$.

Regarding the other two objectives, we can directly obtain the hardness result by changing $W$ in the previous reduction to be the budget of robot or task.
\end{proof}
	   
We note that the reduction in Theorem \ref{the:hard:general} shares the same idea with \cite[Theorem~4.3]{Service:2011aa} and \cite[Theorem~9]{ACGMM18a}, whose construction also have the property that handling $k$ tasks corresponds to finding an independent set of size~$k$ and vice versa.
% \hau{not sure if need to mention it necessarily unless the reduction is very similar and/or the paper is related to this; maybe other reductions are very similar too} {\agnesnew In Thm~9 of \cite{ACGMM18a}, the same inapproximability bound is proven by constructing to IS an instance  with $|V|$ tasks and $|V|^2 - |E|$ robots. Then robots on the same edge are merged. I believe that ``shares a similar idea with'' is the least we are obliged to say about this proof. Alternatively, we also could say that the construction in \cite[Theorem~9]{ACGMM18a} has the property that handling $k$ tasks corresponds to finding an IS of size~$k$ and vice versa. The authors even point this out explicitly.}

% \begin{corollary}
% Even with uniform requirements, {\sc MaxTasksWithinTaskBudget} and {\sc MaxTaskWithinRobotBudget} cannot be approximated within a factor of $({q^*})^{1-\epsilon}$ for any fixed $\epsilon>0$ by any polynomial-time algorithm, where $q^* = \max_{j \in T} q_j$, unless P = NP.
% \end{corollary}
% \begin{proof}
% We can directly obtain the hardness result by changing $W$ in the previous reduction to be the budget of robot or task.
% \end{proof}

% \hau{I would just merge corollary 6.4 into Theorem 6.3}
	   
%-----------------------------------------------------------  
\subsection{Improved Approximations} \label{sec:improved-approx}
% \hau{what is going on here? could we add a sentence on what we are doing here?}

When $q_i$'s are all constants, we provide an improved $\frac{1}{2}(\max_i q_i +1)$ approximation algorithm 
for {\sc MaxTaskWithinRobotBudget} and {\sc MaxTasksWithinTaskBudget}. When costs are symmetric, we show that there is actually a polynomial time approximation scheme (PTAS) for {\sc MaxTaskWithinTotalBudget}.

Given a finite set and a list of subsets, the \emph{Maximum Set Packing} problem \cite{karp1972reducibility} asks for the maximum number of pairwise disjoint sets in the list. Formally, given  a universe $\mathcal {U}$ and a family $\mathcal {S}$ of subsets of $\mathcal {U}$, a packing is a subfamily $\mathcal {C}\subseteq \mathcal {S}$ of sets such that all sets in $\mathcal {C}$ are pairwise disjoint. The \emph{Maximum $k$-Set Packing} ($k$-SP) problem considers the case where all sets in the given family $\mathcal {S}$ are of the same size $k$. \citet{hurkens1989size} provide a $\frac{k}{2}$-approximation algorithm for $k$-SP.
We next show that this algorithm can be used to approximate two of our problems.
%, by a simple local search heuristic, which is the best known approximation to date. 
%\hau{is the local search heuristic polynomial time? usually they are not right? also why do we need the previous result now that this one seems better? maybe only better for the two problems; also does this not contradict the inapprox. results for any eps > 0? for large enough eps you have q+1 /2 bettter than the impossibility results; i think i see why; below assume that q* is a constant}
% {\review{ A: Please check whether the reference I commented out in the first paragraph of the original proof is needed.

\begin{theorem}\label{thm:sp}
When $q^*=\max_{i\in[m]}q_i$ is less than a fixed constant, there exists a $\frac{q^*+1}{2}$-approximation algorithm that runs in polynomial time, for {\sc MaxTaskWithinRobotBudget} and {\sc MaxTasksWithinTaskBudget}.
\end{theorem}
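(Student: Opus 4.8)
The plan is to reduce both {\sc MaxTaskWithinRobotBudget} and {\sc MaxTasksWithinTaskBudget} to the Maximum $k$-Set Packing problem with $k=q^*+1$, and then invoke the $\frac{k}{2}$-approximation of \citet{hurkens1989size}. The crucial structural feature that makes this work---and that distinguishes these two problems from {\sc MaxTaskWithinTotalBudget}---is that here the feasibility of handling a given task is a \emph{local} property: for the robot budget it only constrains the individual robots assigned to that task, and for the task budget it only constrains the cost summed within that task, so whether a task can be handled by a particular robot subset does not depend on the other tasks selected. First I would use the earlier observation that it is without loss of generality to handle each selected task $t_j$ with exactly $q_j$ robots. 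For each $t_j$ I would then enumerate every \emph{feasible handling set}: a subset $R'\subseteq R$ with $|R'|=q_j$ respecting the relevant budget---namely $\sum_{r_i\in R'}c_{ij}\le W$ for {\sc MaxTasksWithinTaskBudget}, and $c_{ij}\le W$ for every $r_i\in R'$ for {\sc MaxTaskWithinRobotBudget}. Since $q_j\le q^*$ is bounded by a fixed constant, each task has at most $\binom{n}{q^*}=O(n^{q^*})$ feasible handling sets, so the entire list is generated in polynomial time; this is precisely where the hypothesis that $q^*$ is constant is used.

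Next I would construct a set-packing instance on the universe $\mathcal{U}=T\cup R$ (augmented with dummy elements, see below). To each feasible handling set $R'$ of $t_j$ I associate the set $S_{j,R'}=\{t_j\}\cup R'$, of size $q_j+1\le q^*+1$. Two such sets are disjoint exactly when they belong to different tasks and use disjoint robot subsets, so a family of pairwise disjoint sets corresponds to a selection of distinct tasks together with pairwise disjoint feasible robot assignments---that is, a feasible allocation---and the number of sets equals the number of handled tasks. To meet the requirement of $k$-SP that all sets share one cardinality $k=q^*+1$, I would pad each $S_{j,R'}$ with $q^*-q_j$ fresh dummy elements unique to that set. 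Because no dummy is shared across sets, padding introduces no new conflicts and leaves the collection of disjoint packings unchanged.

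With this correspondence in hand, the optimal value of the $(q^*+1)$-set packing instance equals $opt$, the maximum number of tasks that can be handled. Running the $\frac{k}{2}$-approximation of \citet{hurkens1989size} returns a packing of size at least $\frac{2}{q^*+1}\cdot opt$, and decoding it (reading off, for each chosen set, its unique task vertex and its robot subset) yields a budget-respecting allocation handling that many tasks, which is exactly the claimed $\frac{q^*+1}{2}$-approximation. The main point to get right is the two-directional exactness of the reduction: every feasible allocation of $k$ tasks must induce $k$ pairwise disjoint sets, and, conversely, every packing of $k$ sets must decode into a budget-respecting allocation of $k$ \emph{distinct} tasks. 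The single task vertex contained in each $S_{j,R'}$ forces the decoded tasks to be distinct, while disjointness of the robot parts guarantees that each robot serves at most one task; the padding argument and the polynomial bound on the number of candidate sets are the remaining points needing care, and both rely on $q^*$ being a fixed constant.
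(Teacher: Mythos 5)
Your proposal is correct and follows essentially the same route as the paper: enumerate all feasible handling sets (polynomial in number because $q^*$ is a fixed constant), reduce to Maximum $(q^*+1)$-Set Packing over the universe of tasks and robots, and apply the $\frac{k}{2}$-approximation of Hurkens and Schrijver. The only (immaterial) difference is how uniform set size is achieved---the paper adds $q^*-q_j$ dummy \emph{robots} per task with zero cost to that task and prohibitive cost elsewhere, whereas you pad each candidate set with fresh dummy elements unique to that set; both devices preserve the packing structure and the optimum.
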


\begin{proof}
%There exists a polynomial-time $\frac{k}{2}$-approximation algorithm for $k$-SP  \cite{hazan2006complexity,hurkens1989size}. We show that this algorithm can be used to approximate the two problems in the claim.

Given an instance $(R,T,q,C,W)$, we modified it to obtain an instance $(R',T,q',C',W)$ without changing the optimal value.
First, for $i\in[m]$, add $q^*-q_i$ dummy robots to $R$, each of which has 0 cost to task $t_i$, and a sufficiently large cost to any other task $t\neq t_i$. Denote the new set of robots by $R'$, consisting of tasks in $R$ and all dummy robots. The tasks have uniform requirement $q^*$. %\hau{why? you did not modify the requirement above; you only added more dummy robots; wait I think you need to add a sentence that we take an instance of the problem and reduce it to something else}. 
It is easy to see that this modification would not change the maximum number of tasks that can be handled.

Second, let $\mathcal {U}=R'\cup T$ be the universe set, and initialize $\mathcal {S}=\emptyset$ as a family of subsets. For every task $t_j\in T$ and every $q^*$ robots (say $r_1,\ldots,r_{q^*}$) in $R'$, if these $q^*$ robots can handle task $t_j$ under the cost constraint $W$ (e.g., for {\sc MaxTaskWithinRobotBudget}, the cost of each robot going to $t_j$ is at most $W$),  add the set $\{t_j,r_1,\ldots,r_{q^*}\}$ to $\mathcal {S}$. Because $q^*$ is less than some constant, this process of constructing $\mathcal {S}$ by enumeration can be done in polynomial time.

Note that all sets in $\mathcal {S}$ are of the same size $q^*+1$. Hence, we obtain a $(q^*+1)$-SP instance $(\mathcal {U},\mathcal {S})$. It is easy to see that the maximum number of pairwise disjoint sets in $\mathcal {S}$ is equal to the maximum number of tasks that can be handled in the modified instance of {\sc MaxTaskWithinRobotBudget} or {\sc MaxTasksWithinTaskBudget}. Using the $\frac{k}{2}$-approximation algorithm for $k$-SP, we complete the proof.
\end{proof}

The above algorithm improves the approximation of Greedy Algorithm in Theorem~\ref{th:boapprox}, when $q^*$ is constant. We remark that the algorithm in Theorem \ref{thm:sp} cannot apply to {\sc MaxTaskWithinTotalBudget}, because $\mathcal {S}$ cannot be constructed efficiently nor independently because of the global constraint $W$.

		%In the case with symmetric costs, for every task $q_j\in T$ and every two robots $r_i,r_k\in R$, we have $c_{ij}=c_{kj}=: c_j$. We show that in this case  MaxTaskWithinTotalBudget is a special case of \emph{multi-dimensional knapsack problem} (MKP): given is a set of items, each having a $d$-dimensional size vector and a profit, and a $d$-dimensional bin. The goal is to select a subset of the items of maximum total profit such that the sum of all vectors is bounded by the bin capacity in each dimension. MKP has a well-known PTAS \cite{frieze1984approximation}, see an overview in \cite{freville2004}.

In the case with symmetric costs, we can obtain an approximation better than a constant approximation for {\sc MaxTaskWithinTotalBudget}.

\begin{theorem}
For symmetric costs, {\sc MaxTaskWithinTotalBudget} has a PTAS.
\end{theorem}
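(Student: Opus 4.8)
The plan is to exploit the equivalence established in the proof of Theorem~\ref{thm:sssp}: with symmetric costs, {\sc MaxTaskWithinTotalBudget} is exactly a $2$-dimensional knapsack problem with unit profits, where each task $t_j$ is an item of profit $1$ and two-dimensional size $(q_j,\, q_jc_j)$, subject to the capacity $n$ on the number of robots (constraint~(\ref{eq:1})) and the capacity $W$ on the total cost (constraint~(\ref{eq:2})). I would then combine a cheap LP-rounding argument with brute-force enumeration over small solutions, switching between the two depending on the (unknown) size of the optimum.

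\textbf{The algorithm.} Fix $\epsilon>0$ and set $k_0=\lceil 2/\epsilon\rceil$, a constant. First, I would enumerate every subset of at most $k_0$ tasks, discard the infeasible ones (those violating (\ref{eq:1}) or (\ref{eq:2})), and record a largest feasible subset $A_1$; since $k_0$ is constant this runs in $O(m^{k_0})$ time. Second, I would solve the LP relaxation obtained by replacing $x_j\in\{0,1\}$ with $0\le x_j\le 1$, compute a basic optimal solution $x^*$, delete every task whose value is fractional, and keep the set $A_2$ of tasks with $x_j^*=1$. The output is whichever of $A_1,A_2$ handles more tasks.

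\textbf{Why it works.} The crucial structural fact is that a basic optimal solution of the relaxation has at most two fractional coordinates: a vertex is pinned down by $m$ tight linearly independent constraints, a coordinate strictly between $0$ and $1$ has neither of its box constraints tight, and only the two packing constraints (\ref{eq:1})--(\ref{eq:2}) remain to make such coordinates fractional. Hence $A_2$ is feasible (deleting items only frees resources) and $|A_2|\ge \mathrm{LP}^*-2\ge opt-2$, since the relaxation value dominates the integral optimum. I then split on $opt$. If $opt\le k_0$, then $A_1$ is an optimal integral solution, because the enumeration inspects every candidate of that size. If $opt>k_0\ge 2/\epsilon$, then $|A_2|\ge opt-2\ge(1-\epsilon)\,opt$, using $2\le \epsilon k_0<\epsilon\,opt$. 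In either case the returned solution handles at least $(1-\epsilon)\,opt$ tasks, which yields a PTAS with ratio $\frac{1}{1-\epsilon}$.

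\textbf{Main obstacle.} The only nontrivial point is the bound on the number of fractional variables in a vertex of the relaxation; everything else is bookkeeping. That bound is exactly what lets the LP-rounding lose merely an additive constant, and pairing it with the brute-force branch for small $opt$ is what converts the additive guarantee into a multiplicative $(1-\epsilon)$ one. Some care is needed to align the two regimes at the threshold $k_0$ and to note that this $2$-constraint LP is solvable in polynomial time, but both are standard.
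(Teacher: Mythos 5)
Your proof is correct, and it follows the same high-level route as the paper: both arguments rest on the equivalence, established in the proof of Theorem~\ref{thm:sssp}, between {\sc MaxTaskWithinTotalBudget} with symmetric costs and the $2$-dimensional knapsack problem with unit profits, i.e., the integer program with objective $\sum_j x_j$ and the two packing constraints (\ref{eq:1}) and (\ref{eq:2}). The difference is that the paper stops there and invokes the known PTAS for multi-dimensional knapsack (Frieze and Clarke) as a black box, whereas you open the box and prove the PTAS directly: enumerate all feasible subsets of at most $\lceil 2/\epsilon\rceil$ tasks, and separately take the integral part of a basic optimal solution of the LP relaxation, which has at most two fractional coordinates because a vertex needs $m$ tight linearly independent constraints and only the two packing constraints can account for coordinates strictly inside the box. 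Your version is in fact a legitimate simplification of the cited scheme: because all profits are unit, discarding the at most two fractional items costs only an additive $2$, so you do not need the usual step of guessing the most profitable items of an optimal solution --- the case split on whether $opt$ exceeds $2/\epsilon$ suffices. What the paper's citation buys is brevity and the full generality of arbitrary profits; what your argument buys is a self-contained, elementary proof whose only nontrivial ingredient is the standard bound on the number of fractional variables at an LP vertex. All the individual steps (feasibility of the rounded-down solution, $|A_2|\ge \mathrm{LP}^*-2\ge opt-2$, and the alignment of the two regimes at the threshold) check out.
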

\begin{proof}
As shown in the proof of Theorem \ref{thm:sssp},  {\sc MaxTaskWithinTotalBudget} with symmetric costs is equivalent to the 2-dimensional knapsack problem with unit profit. It is well-known that multi-dimensional knapsack problem has a PTAS \cite{frieze1984approximation} (see an overview in \cite{freville2004}). This immediately gives a PTAS for {\sc MaxTaskWithinTotalBudget} with symmetric costs.
\end{proof}

\section{Conclusions}
Multi-robot task allocation is hugely important in the multi-agent and robotics communities. 
In this paper, we undertook a detailed study of the computational complexity of multi-robot allocation to tasks where each task has a requirement of a certain number of robots. 
A notable open problem is the complexity of {\sc MaxTaskWithinTaskBudget} for the case when each task requires at most 2 robots.
We presented several approximation algorithms for general and important special cases as well as proved approximation lower bounds. 
For future work, it will be interesting to explore parametrized algorithms for the problems considered, with respect to natural parameters such as the target number of tasks to be handled.

\section*{Acknowledgements}
		Aziz and Ramezani are supported by  the Australian Defence Science and Technology Group (DSTG) under the project
``Auctioning for distributed multi vehicle planning'' (MYIP 9953) and by US Dept. of Air force under the project 
``Efficient and fair decentralized task allocation algorithms for autonomous vehicles'' (FA2386-20-1-4063). 
Cseh is supported by the Federal Ministry of Education and Research of Germany in the framework of KI-LAB-ITSE (project number 01IS19066). Bo Li is supported by The Hong Kong Polytechnic University (Grant NO. P0034420). 

% {\fahimeh
% Actually in the parametrized section, I show if we have an specific set of tasks call it $T'\subseteq T$, then we can check weather there is an allocation of robots  which handled all the tasks in $T'$ in polynomial time. I built an gadget graph and use min flow algorithm to solve that. Just I figure out with theorem 4.1, also we can obtain this result. After that I try all possible subsets of tasks, to see which one is possible and among all the possible one I choose the biggest one and obtain the following theorem
%
% \begin{theorem}
%  \textsc{MaxTaskWithinRobotBudget} can be solved optimally in polynomial time when the number $k$ of tasks is fixed. The running time is $O(2^m (m^2n^2 log(m+n)) $, where $m$ is the number of tasks and $n$ is the number of robots.
%  \end{theorem}
%
%  I agree with you to remove this part as well.
% }
		
				% \begin{center}
				% 	\scalebox{0.85}{
				% 	\begin{tabular}{cccc}
				% 		\toprule
				% 		Problem &  $q_i\leq 2$	 &  $q_i\geq 3$ & $q_i\geq 3$ \\
				% 		&  &  & location-based costs\\
				% 		\midrule
				% 		MaxTaskWithinTotalBudget & in P	 & NP-c  & ?\\
				% 		\midrule
				% 		MaxTaskWithinTaskBudget& $? $ & NP-c  & NP-c \\
				% 		\midrule
				% 		MaxTaskWithinRobotCapacity& in P  & NP-c& ?\\
				% 		\bottomrule
				% 	\end{tabular}
				% 	}
				% \end{center}

		% \bibliographystyle{plainnat}
		%\bibliographystyle{IEEEtranN}

		%\bibliography{IEEEabrv,group.bib,allocation.bib,aziz.bib}

\bibliographystyle{ACM-Reference-Format}  % do not change this line!
%\bibliographystyle{abbrv}
% \bibliography{IEEEabrv,new.bib,group.bib,allocation.bib,aziz.bib}

\bibliography{newref.bib}

\end{document}